%% file: main_arxiv.tex
\documentclass[letterpaper,11pt]{article}
\usepackage[margin=1in]{geometry}

\input{defs}
\usepackage{enumitem}
\usepackage{graphicx,subcaption,xcolor}
\usepackage{url}

\providecommand{\E}{\mathbb{E}}
\providecommand{\Pp}{\mathbb{P}}
\providecommand{\R}{\mathbb{R}}
\providecommand{\A}{\mathcal{A}}
\providecommand{\X}{\mathcal{X}}
\providecommand{\Fcal}{\mathcal{F}}
\providecommand{\Gcal}{\mathcal{G}}
\providecommand{\Ical}{\mathcal{I}}
\providecommand{\Tcal}{\mathcal{T}}
\providecommand{\Zcal}{\mathcal{Z}}
\providecommand{\dd}{\,\mathrm{d}}
\providecommand{\eps}{\varepsilon}
\providecommand{\ind}{\mathbbm{1}}
\providecommand{\Var}{\operatorname{Var}}
\providecommand{\norm}[1]{\left\lVert #1 \right\rVert}
\newcommand{\TCAB}{\texttt{TCAB}}
\newcommand{\TCABstar}{\texttt{TCAB.STAR}}
\newcommand{\TCABmst}{\texttt{TCAB.MST}}

\newtheorem{proposition}{Proposition}
\numberwithin{theorem}{section}
\numberwithin{proposition}{section}
\numberwithin{lemma}{section}
\numberwithin{corollary}{section}
\numberwithin{definition}{section}
\numberwithin{assumption}{section}

\begin{document}

\title{Fast A/B/n Testing: Exact Multi-Policy Comparison via Tree-Coupled Feedback Sharing}

\author{Yuxiao Wen\thanks{Yuxiao Wen is with the Courant Institute of Mathematical Sciences, New York University, email: \url{yuxiaowen@nyu.edu}.}}

\maketitle

\begin{abstract}
Online platforms increasingly compare many adaptive decision policies---ranking systems, recommendation algorithms, pricing rules, and language-model agents---while each reward-bearing interaction can be costly or risky.  A direct A/B/n design gives each of $J\ge 2$ policies its own horizon-$T$ trajectory and therefore uses $JT$ outcomes.  We introduce Tree-Coupled A/B Testing (\TCAB), an exact feedback-sharing design for arbitrary history-dependent contextual-bandit policies.  At each round, a predictable tree connects the current policy histories; every parent--child context--action law is maximally coupled, and one reward is shared within each component of matched tree edges.  Every policy retains exactly its standalone finite-horizon trajectory law, even though the policies are deliberately dependent.  If $D_{e,t}$ records a mismatch on tree edge $e$ at round $t$, the number of reward queries satisfies the pathwise identity $N(T)=T+\sum_{t,e}D_{e,t}$ and hence equals $T$ plus cumulative tree-edge total variation in expectation.  This cost is conditionally optimal among exact edge-local designs on the selected tree, and a current-round minimum-spanning tree is myopically optimal among tree designs.  For fixed $J$, sublinear pseudo-regret of every policy and almost-sure uniqueness of the oracle action imply $\E[N(T)]=T+o(T)$, versus $JT$ for independent runs.  We also obtain finite-sample variance bounds for pairwise policy contrasts.  Experiments on reward-model evaluation, multiple-choice language-model evaluation, and adaptive search policies demonstrate substantial improvements in the cost--precision frontier.
\end{abstract}

\begin{keywords}
Online experimentation; A/B testing; contextual bandits; maximal coupling; adaptive policy comparison; feedback sharing.
\end{keywords}

\section{Introduction}

Online controlled experiments are production infrastructure at major digital platforms.  Google, Microsoft/Bing, and LinkedIn have described systems that support overlapping experiments and continuous product iteration \citep{tang2010overlapping,kohavi2013large,xu2015infrastructure}.  A cross-industry summit involving thirteen organizations reported that the participating organizations had collectively tested more than one hundred thousand treatment variants in the preceding year \citep{gupta2019challenges}.  The same pressure now appears in machine-learning evaluation: product teams compare many recommender-system configurations, ranking policies, prompting strategies, and model checkpoints, while human or online feedback remains expensive.  For example, the original Chatbot Arena study accumulated more than 240,000 human preference votes to compare language models \citep{chiang2024chatbot}.

A useful distinction is between reusing \emph{traffic} and reusing \emph{feedback}.  Google's overlapping-experiment infrastructure, for example, allows compatible experiments in different layers to share the same underlying traffic while preserving the randomization needed within each experiment \citep{tang2010overlapping}.  Our question is complementary and operates inside a multi-policy comparison: when several candidate policies would produce the same contextual decision, can one realized outcome be used by several policies without changing the finite-horizon law of any of them?  This distinction matters because traffic multiplexing alone does not remove duplicated reward-bearing interactions among the alternatives in a single policy-comparison problem.

The classical A/B experiment is designed for static treatments.  Modern systems instead often compare \emph{policies}: a policy observes the current query or user context, chooses an action, receives only the selected action's outcome, and may update all later decisions from its adaptive history.  To compare $J\ge 2$ candidate policies over the same target horizon $T$, the direct A/B/n design runs $J$ independent trajectories and uses $JT$ reward-bearing interactions.  This cost is operationally important when an experimental action can reduce user experience, consume expert labels, invoke a costly model, or delay a deployment decision.

There is nevertheless a large source of redundancy.  Candidate policies generated by neighboring hyperparameters, model checkpoints, or business rules frequently make the same decision on the same context.  When two policies realize the same complete context--action pair, they require the same conditional reward law and can therefore receive the same physical outcome.  Purposefully coupling their trajectories can remove duplicated reward noise and save an interaction.  The challenge is to do so without changing either policy's adaptive trajectory distribution.

Contextuality makes this challenge fundamentally different from replaying an arm label.  Even under i.i.d. contexts, the contexts attached to the observations of a selected action depend on the policy's selection rule and history.  Moreover, in applications such as ranked slates, auctions, and recommendation, the outcome of choosing one item may depend on the entire displayed slate or user state, not only on the chosen item's local feature.  We therefore model a full-context reward kernel $Q_a(\cdot\mid x)$ and couple the policies' complete one-step \emph{context--action laws}.  A maximal coupling makes two complete pairs equal with the largest probability permitted by their total-variation distance; the shared branch uses one reward and the residual branch opens a fresh query.

Moving from two policies to $J\ge2$ creates a second obstruction.  Pairwise maximal couplings need not be jointly compatible: in general there is no single coupling that maximizes the equality probability of every pair simultaneously \citep{angel2019pairwise}.  We resolve this incompatibility with a policy tree.  Pairwise couplings prescribed on the $J-1$ edges of an acyclic graph can always be glued into one joint law.  A broken edge starts a new reward lineage; matched edges transmit the same context, action, and reward.  Consequently, the number of reward queries is exactly one plus the number of broken edges in each round.

Our algorithm, Tree-Coupled A/B Testing (\TCAB), is round-synchronous.  At the beginning of round $t$, the experiment may select any tree measurable with respect to the histories available through round $t-1$.  It then samples the root, traverses the tree by depth, queries one outcome per matched-edge component, and updates all policies simultaneously.  This outer-loop-over-time formulation directly permits predictable, time-adaptive trees.  It also exposes practical parallelism: all children at the same depth can be coupled concurrently once their parents are available, and the reward queries for distinct components can be issued concurrently.  Adaptive policies remain sequential across rounds, as they must, while nonadaptive policies can additionally parallelize across rounds.

\begin{figure}[h]
    \centering
    \begin{minipage}[c]{0.65\textwidth}
        \centering
        \begin{subfigure}{0.48\linewidth}
            \centering
            \includegraphics[width=\linewidth]{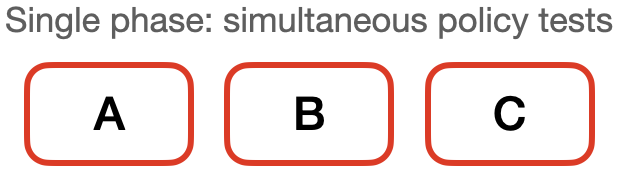}
            \caption{Standard A/B/C test}
            \label{fig:abc_test}
        \end{subfigure}
        \hfill 
        \begin{subfigure}{0.48\linewidth}
            \centering
            \includegraphics[width=\linewidth]{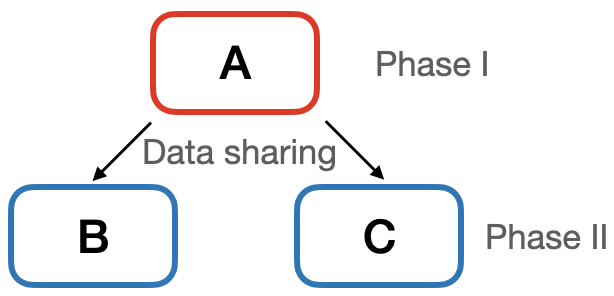}
            \caption{Tree-based with data sharing}
            \label{fig:tree_sharing}
        \end{subfigure}
    \end{minipage}
    \hfill 
    \begin{minipage}[c]{0.30\textwidth}
        \caption{In standard A/B tests, policies are run independently and simultaneously. In \texttt{TCAB}, the policies are run in phases per round, which are determined by the tree, and reuse data from previous phases.}
        \label{fig:tree_example}
    \end{minipage}
\end{figure}

The resulting guarantees are finite-horizon and model-free.  Every policy trajectory has exactly its standalone law, so all values and policy contrasts are unbiased.  The expected query cost is $T$ plus cumulative total variation over the selected tree edges.  This identity yields a precise design principle: a baseline-centered star is simple and makes every baseline comparison maximal, while an oracle minimum-spanning tree minimizes the \emph{current-round} cost among edge-local tree designs.  We emphasize that this is a myopic tree-optimality statement, not a claim of global optimality among arbitrary multi-marginal couplings or over the full adaptive horizon.

The method is most useful when candidate policies are related rather than arbitrary.  Examples include neighboring model checkpoints, nearby hyperparameter settings, alternative ranking or recommendation rules that agree on most users, and learning algorithms that increasingly concentrate on the same good actions.  In the first regime, the relevant tree-edge total-variation distances are small from the outset; in the second, our regret analysis shows that the excess query cost can vanish relative to $T$.  Conversely, if candidate policies almost never agree on their complete context--action pairs, \TCAB\ remains exact but offers little query saving.  This makes the theory directly diagnostic: the same edge disagreements that determine cost also indicate when feedback sharing will or will not help.

\paragraph{Contributions.}
Our main contributions are as follows.
\begin{enumerate}[leftmargin=2em,itemsep=0.3em]
    \item We formulate exact finite-horizon comparison of $J$ arbitrary history-dependent, possibly randomized contextual policies under i.i.d. full contexts and nonparametric full-context reward kernels $Q_a(\cdot\mid x)$.
    \item We introduce a round-synchronous, predictably time-adaptive version of \TCAB.  Maximal couplings on all selected tree edges coexist, all same-depth coupling operations and all component-level reward queries admit parallel execution, and every policy retains exactly its standalone trajectory law.
    \item We prove the pathwise and expected cost identities
    \[
    N(T)=T+\sum_{t=1}^T\sum_{e\in E_t}D_{e,t},
    \qquad
    \E[N(T)]=T+\sum_{t=1}^T\E\!\left[\sum_{e\in E_t}\delta_{e,t}\right].
    \]
    Within the natural class of conditionally exact edge-local designs, \TCAB\ is optimal on every selected tree.  A baseline star and a current-round minimum-spanning tree give two concrete specializations.
    \item We establish a finite-sample regret-to-cost bound for predictable tree sequences.  For fixed $J$, almost-sure uniqueness of the oracle action and $o(T)$ pseudo-regret for every policy imply $T+o(T)$ expected reward queries; a margin condition yields an explicit rate.
    \item We prove finite-sample variance bounds for every pairwise policy contrast.  The bounds extend to time-varying trees and isolate the roles of edge mismatches and realized pseudo-regret; a general fixed zero-sum extension is given in the appendix.
    \item Across two language-model evaluation tasks and an adaptive search-bandit task, \TCAB\ improves the empirical cost--precision trade-off relative to independent A/B/n baselines with matched or full budgets.
\end{enumerate}

\section{Related literature}

\paragraph{Large-scale online experimentation.}
Online controlled experimentation has become core infrastructure at large digital platforms.  Google's overlapping-experiment architecture was designed to run more experiments on limited traffic by allowing compatible experiments in different layers to overlap on the same users or queries \citep{tang2010overlapping}; Microsoft/Bing and LinkedIn describe related large-scale experimentation systems and organizational challenges \citep{kohavi2013large,xu2015infrastructure}.  A cross-industry summit emphasizes the scale of the resulting experimentation programs and the operational pressure for faster, more reliable decisions \citep{gupta2019challenges}.  These systems motivate our resource question but solve a different problem: they multiplex traffic across experiments, whereas \TCAB\ shares realized feedback across the candidate policies within one multi-policy comparison while preserving each policy's standalone trajectory law.

\paragraph{Artificial Replay and comparison of learning algorithms.}
The closest methodological work is \citet{meng2026design}, which introduces Artificial Replay for comparing two context-free stochastic bandit algorithms and proves exact marginal, interaction-cost, and variance guarantees.  Our setting introduces two obstacles absent from the two-policy context-free construction.  First, a contextual policy selects actions after observing a query, so exact reuse requires coupling the complete context--action law rather than only an arm label.  Second, for $J\ge3$ distributions, all pairwise maximal couplings need not be compatible; our policy tree selects $J-1$ pairs whose maximal couplings can be glued simultaneously.  The phrase ``artificial replay'' is also used by \citet{banerjee2022artificial} for incorporating an exogenous historical data set to warm-start a bandit.  That problem concerns how one learner uses historical observations, rather than how several prospective policy trajectories share newly generated feedback.

\paragraph{Multiple comparisons, adaptive experiments, and policy selection.}
Classical many-to-one procedures compare several static treatments with a common control while accounting for multiplicity \citep{dunnett1955multiple}.  Adaptive-design work studies how observations should be allocated, or how valid inference can be maintained, under adaptively collected data \citep{kasy2021adaptive,hadad2021confidence,simchilevi2025experimental}.  Related work on policy comparison and selection includes safe exploration for evaluating several policies \citep{wan2022safe}, high-confidence off-policy selection \citep{kuzborskij2021confident}, and ranking policies from a fixed experience data set \citep{yang2022offline}.  These works optimize allocation or inference under a given data-generating scheme.  Our objective is different: construct a joint prospective experiment in which every adaptive candidate has exactly the finite-horizon path law it would have had in isolation, but duplicated reward queries are shared whenever coupling permits.

\paragraph{Contextual replay and off-policy evaluation.}
\citet{li2011unbiased} give an exact replay evaluator for a contextual-bandit algorithm from a uniformly randomized log: a logged event is retained when the target algorithm chooses the logged action, so the retained adaptive history has the same law as an online target trajectory.  Inverse-propensity, doubly robust, self-normalized, and related methods form a broader off-policy-evaluation literature \citep{dudik2011doubly,swaminathan2015counterfactual,wang2017optimal,su2020shrinkage,zhan2021adaptive}.  Those methods begin with an exogenous log and therefore require support, weighting, modeling, or a bias--variance compromise.  In \TCAB, the candidate policies are runnable and the experiment prospectively generates residual observations when policies fail to couple.  Lack of overlap therefore raises the number of reward queries rather than invalidating the target policy trajectory.

\paragraph{Data sharing and interference between learning algorithms.}
A distinct line of work asks what happens when algorithms in an A/B experiment share training data.  \citet{brennan2025symbiosis} document ``symbiosis bias'' in recommendation experiments, and \citet{li2025sharing} analyze settings in which data sharing can alter or even reverse the ranking of two bandit algorithms.  Such sharing changes what the learning algorithms observe and can therefore change the estimand.  \TCAB\ instead introduces dependence through an exact coupling: an observation is shared only on a branch for which the recipient policy's conditional transition law remains correct.  Hence cross-policy dependence is intentional and accounted for, while each individual policy's path law is preserved.

\paragraph{Maximal coupling, common random numbers, and multi-marginal transport.}
For two probability laws, maximal coupling attains the total-variation lower bound on disagreement \citep{thorisson2000coupling,lindvall2002lectures}; one-sided rejection constructions provide exact implementations under density-ratio access \citep{corenflos2022coupled}.  With three or more marginals, simultaneous pairwise maximality can fail \citep{angel2019pairwise}.  Our tree construction uses the complementary fact that prescribed pairwise couplings on the edges of an acyclic graph can always be glued, so exactly $J-1$ strategically selected pairs can be maximal without approximation.  Sharing contexts and outcomes is also related to common-random-number methods for simulation comparison \citep{kleijnen1975antithetic,nelson1995common,glasserman2004monte}.  The adaptive-policy setting is more delicate because shared randomness enters a learner's history and changes later actions unless the correct conditional transition is maintained recursively.  At a fixed round, globally minimizing the number of distinct realized context--action pairs over all $J$ marginals can be viewed as a multi-marginal optimal-transport problem \citep{pass2015multimarginal,villani2009optimal}.  We use the tree restriction for exactness, transparency, and an implementable local sampler; we do not claim to solve the unrestricted multi-marginal problem.

\paragraph{Bandits and no-regret online learning.}
Our efficiency guarantees connect directly to the extensive literature on no-regret sequential decision-making. In stochastic multi-armed bandits, classical upper-confidence-bound algorithms achieve logarithmic instance-dependent regret \citep{auer2002finite}, while Thompson sampling admits logarithmic instance-dependent and near-optimal worst-case regret guarantees \citep{agrawal2012analysis,agrawal2013further}. No-regret guarantees extend broadly to contextual and structured settings, including contextual bandits with general policy classes \citep{beygelzimer2011contextual,agarwal2014taming}, linear contextual bandits \citep{abbasi2011improved,agrawal2013thompson}, and stochastic contextual models under structural conditions such as smooth covariate effects, covariate diversity, and plentiful contexts \citep{perchet2013multi,bastani2021mostly,wu2020stochastic,ghosh2022breaking,hanna2023contexts,wen2026optimal}; see \citet{lattimore2020bandit} for a general treatment. Such online-learning models arise in a wide range of applications, including pricing \citep{kleinberg2003value,cohen2020feature}, advertising \citep{weed2016online,wen2025joint,hu2026learning}, and search and recommendation systems \citep{li2010contextual,wen2026marginal}. \TCAB\ does not require the candidate policies to belong to any particular model class and treats them as black boxes: these structural assumptions are needed only to establish a particular policy's regret rate, not for coupling exactness or the query-cost identity. The connection arises through our query-efficiency result: whenever the compared policies have sublinear pseudo-regret and increasingly concentrate on a common optimal action, their pairwise disagreement is sublinear, and \TCAB\ requires only $T+o(T)$ reward queries for any fixed number of policies, rather than the $JT$ queries required by independent evaluation. Thus, advances in no-regret learning that make candidate policies converge more rapidly toward the oracle translate directly into greater feedback sharing under \TCAB.

\section{Problem formulation}
\label{sec:setup}

\subsection{Stochastic contextual environment}

Let $T\ge1$ be the target horizon, $\A=[K]$ the action space, and $(\X,\mathcal B_{\X})$ a standard Borel full-context space.  Contexts arrive as
\begin{equation}
X_t\sim P_X\quad\text{i.i.d. over }t.
\label{eq:context-law}
\end{equation}
For a slate, $X_t$ may contain the user and all candidate-item features; changing feasible actions can likewise be encoded in $X_t$.  For each $a\in\A$, a reward query at $(x,a)$ returns a draw from the full-context kernel $Q_a(\cdot\mid x)$, with mean $\mu_a(x):=\int r\,Q_a(\dd r\mid x)$.

\begin{assumption}[I.i.d. stationary full-context outcomes]
\label{ass:environment}
Conditional on the queried context--action pairs, reward queries are independent draws from their corresponding kernels $Q_a(\cdot\mid x)$; rejected unlabeled context proposals generate no reward and cause no carryover or interference.
\end{assumption}
The full-context specification allows the selected action's outcome to depend on the entire user/slate state and guarantees that two policies matching on $(X,A)$ require the same conditional reward law.

\subsection{Candidate policies and estimands}

Fix $J\ge2$ policies indexed by $\Ical:=\{0,1,\ldots,J-1\}$.  Policy $j$ is a sequence of measurable kernels $\pi_{j,t}(\cdot\mid h,x)\in\Delta(\A)$ with history $H_{t-1}^j=(X_s^j,A_s^j,R_s^j)_{s<t}$; any persistent internal state affecting future decisions is included in this state.  In a standalone run,
\begin{equation}
X_t^j\sim P_X,\qquad
A_t^j\sim\pi_{j,t}(\cdot\mid H_{t-1}^j,X_t^j),\qquad
R_t^j\sim Q_{A_t^j}(\cdot\mid X_t^j).
\label{eq:standalone-transition}
\end{equation}
Define
\begin{equation}
S_j(T):=\sum_{t=1}^T R_t^j,\qquad
V_j(T):=\E[S_j(T)],\qquad
V(T):=(V_0(T),\ldots,V_{J-1}(T))^\top.
\label{eq:values}
\end{equation}
For a baseline $r$, let
\begin{equation}
\theta^{(r)}(T):=\left(V_j(T)-V_r(T)\right)_{j\ne r}.
\label{eq:baseline-contrasts}
\end{equation}
Independent runs spend $JT$ reward queries.  We seek exact horizon-$T$ trajectories for all policies using a random total $N(T)\le JT$, while preserving unbiased estimation of these values and baseline contrasts.

\subsection{Operational access}

\begin{assumption}[Pairwise context--action coupling access]
\label{ass:access}
Before requesting a reward, the experiment can draw an unlabeled $x\sim P_X$, side-effect-free sample/evaluate the relevant policy kernels at $x$, reject a proposal without updating either policy, and request a reward only after a context--action pair is accepted (or reuse an already queried reward after an exact match).  Equivalently, the experiment can implement the pairwise maximal-coupling primitive used below.
\end{assumption}

More explicitly, for an ordered parent--child pair $(p,v)$ at current histories $(h_p,h_v)$, the experiment must be able to (i) draw $x\sim P_X$ before requesting a reward; (ii) sample from and evaluate $\pi_{p,t}(\cdot\mid h_p,x)$ and $\pi_{v,t}(\cdot\mid h_v,x)$ without advancing either policy state; (iii) reject a proposal and return the underlying query to the default workflow without updating either policy; and (iv) after acceptance, either query $Q_a(\cdot\mid x)$ or copy an already queried reward when the two complete pairs match.  The ability to screen contexts without consuming the reward-bearing resource is the operational distinction between a context proposal and an experimental reward query.

We count the latter resource.  Depending on the application, it may correspond to exposing a user to an experimental action and observing a click or purchase, collecting a human preference label, invoking an expensive downstream model or tool, or obtaining another outcome that is materially more costly than evaluating the candidate policies on an unlabeled context.  The number of screened proposals can also matter computationally; Section \ref{sec:implementation} gives its exact rejection-sampling behavior separately from the reward-query count $N(T)$.

Neither the density of $P_X$ nor the reward kernel $Q_a$ needs to be evaluated.  What is needed is policy-probability access on a proposed context.  For deterministic policies, these probabilities reduce to evaluating the policies' chosen actions, and the coupling implementation simplifies accordingly.

\section{Tree-Coupled A/B Testing}
\label{sec:method}

Let $\Fcal_{t-1}$ contain all policy histories and experimental randomness revealed through round $t-1$.  At the start of round $t$, choose an $\Fcal_{t-1}$-measurable rooted spanning tree $\Tcal_t=(\Ical,E_t,r_t)$.  Thus the tree may react to past data but cannot depend on unrevealed round-$t$ information.  Write $p_t(v)$ and $\operatorname{dep}_t(v)$ for the parent and depth of a nonroot vertex $v$.

\subsection{Conditional context--action laws and tree coupling}

Let $\Zcal:=\X\times\A$, $Z_t^j=(X_t^j,A_t^j)$, and, conditional on policy history $h_j$, define the policy's next complete-pair law
\begin{equation}
\nu_{j,t}^{h_j}(\dd x,a):=P_X(\dd x)\,\pi_{j,t}(a\mid h_j,x).
\label{eq:nu-j}
\end{equation}
For policies $i,j$ let
\begin{equation}
\delta_{ij,t}(h_i,h_j)
:=\TV\!\left(\nu_{i,t}^{h_i},\nu_{j,t}^{h_j}\right)
=\frac12\int_{\X}\!\norm{\pi_{i,t}(\cdot\mid h_i,x)-\pi_{j,t}(\cdot\mid h_j,x)}_1P_X(\dd x).
\label{eq:pair-tv}
\end{equation}
A maximal coupling of these two laws matches their complete pairs with probability $1-\delta_{ij,t}$.  Conditional on $\Fcal_{t-1}$ and $\Tcal_t$, \TCAB\ samples the root from its law \eqref{eq:nu-j} and recursively maximally couples each child to its parent.  Since the selected edges form a tree, the $J-1$ prescribed edge couplings can be glued into a single joint law while preserving every node marginal; the formal gluing statement is proved in Appendix \ref{app:coupling}.

The use of the complete pair $Z=(X,A)$ is essential.  Sharing merely because two policies select the same arm can be invalid in a contextual problem: the distribution of the context attached to that arm is itself policy dependent.  With the full-context reward model $Q_a(\cdot\mid x)$, equality of $(X,A)$ is exactly the event on which two policies require the same conditional reward law.

\subsection{One-sided rejection implementation}
\label{sec:implementation}

Every law in \eqref{eq:nu-j} is dominated by $\rho:=P_X\otimes\mathrm{counting}$, with density
\[
p_{j,t}^{h_j}(x,a)=\pi_{j,t}(a\mid h_j,x).
\]
Consider an oriented tree edge $(p,v)$ and fix the current histories $(h_p,h_v)$.  A parent-first maximal coupling first draws $Z_p\sim\nu_{p,t}^{h_p}$ and lets the child inherit this same pair with probability
\begin{equation}
q_{pv,t}(Z_p)
:=\min\!\left\{1,
\frac{p_{v,t}^{h_v}(Z_p)}{p_{p,t}^{h_p}(Z_p)}\right\}
=\min\!\left\{1,
\frac{\pi_{v,t}(A_p\mid h_v,X_p)}{\pi_{p,t}(A_p\mid h_p,X_p)}\right\}.
\label{eq:edge-first-accept}
\end{equation}
The denominator is positive almost surely on a pair generated by the parent.  The accepted mass is the overlap measure $\min\{p_{p,t}^{h_p},p_{v,t}^{h_v}\}\rho$.

If the parent pair is rejected, the child must be drawn from the residual part of its own law.  Operationally, repeatedly draw
\[
X\sim P_X,\qquad A\sim\pi_{v,t}(\cdot\mid h_v,X),
\]
and accept the proposal with probability
\begin{equation}
s_{pv,t}(X,A)
:=1-\min\!\left\{1,
\frac{\pi_{p,t}(A\mid h_p,X)}{\pi_{v,t}(A\mid h_v,X)}\right\}.
\label{eq:edge-residual-accept}
\end{equation}
The accepted residual has normalized density
\[
\frac{(p_{v,t}^{h_v}-p_{p,t}^{h_p})_+}{\delta_{pv,t}(h_p,h_v)}
\]
with respect to $\rho$.  Thus the procedure is an exact maximal coupling.  The common context measure cancels from both ratios, which is why the experiment never needs to know a density for $P_X$; the reward kernel also does not enter the coupling decision.

For deterministic policies, \eqref{eq:edge-first-accept} reduces to an intuitive rule: the parent's pair is inherited exactly when the child would choose the same action on the parent's context.  After a mismatch, contexts are proposed until the two current decision rules disagree in the direction required by the residual law.  This is also useful operationally because model scoring or policy evaluation can often be done before committing a query to the experimental treatment.

\begin{proposition}[Exact rejection implementation and proposal overhead]
\label{prop:implementation}
Let $U_{pv,t}$ denote the number of sampled proposals before one is accepted by \eqref{eq:edge-residual-accept}. Under Assumption \ref{ass:access}, \eqref{eq:edge-first-accept}--\eqref{eq:edge-residual-accept} implement the edge couplings used by \TCAB.  For an edge $(p,v)$ with $\delta_{pv,t}>0$, conditional on entering the residual branch, the number of child proposals is geometric with mean $1/\delta_{pv,t}$.  Since the residual branch itself is entered with probability $\delta_{pv,t}$, its unconditional expected proposal contribution is at most one:
\[
\E[U_{pv,t}] = \begin{cases}
    1, &\text{if $\delta_{pv,t}>0$;}\\
    0, &\text{if $\delta_{pv,t}=0$.}
\end{cases}
\]
Hence the total expected residual-proposal overhead is at most $(J-1)T$.
\end{proposition}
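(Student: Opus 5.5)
We work on a single oriented edge $(p,v)$ and condition on $\Fcal_{t-1}$, on $\Tcal_t$, and on the current histories $(h_p,h_v)$. Write $f:=p_{p,t}^{h_p}$, $g:=p_{v,t}^{h_v}$ and $\delta:=\delta_{pv,t}(h_p,h_v)=\int (g-f)_+\dd\rho=\int (f-g)_+\dd\rho$. All claims then reduce to elementary computations with densities with respect to $\rho=P_X\otimes\mathrm{counting}$. At the end we aggregate over edges and rounds.

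\textbf{Exactness.} The first stage draws $Z_p\sim f\rho$ and accepts with probability $\min\{1,g/f\}$. So the accepted sub-probability measure is $f\min\{1,g/f\}\rho=\min\{f,g\}\rho$, with total mass $1-\delta$. On this branch we set $Z_v=Z_p$.

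In the residual branch, proposals $(X,A)\sim g\rho$ are independent and each is accepted with probability $s=1-\min\{1,f/g\}$. Hence the accepted sub-density is $g\,s=(g-f)_+$, and a single proposal is accepted with probability $\int(g-f)_+\dd\rho=\delta$. By the standard rejection-sampling argument, the first accepted proposal has normalized law $(g-f)_+\rho/\delta$.

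Mixing the two branches, the law of $Z_v$ is
\[
\min\{f,g\}\rho+\delta\cdot\frac{(g-f)_+}{\delta}\rho=g\rho ,
\]
which is the correct child marginal. Moreover $\Pp(Z_p=Z_v)\ge 1-\delta$, and the coupling inequality $\Pp(Z_p\neq Z_v)\ge\TV(f\rho,g\rho)=\delta$ forces equality, so the coupling is maximal. One point needs checking here. On the residual branch, $Z_v$ lies in $\{g>f\}$ while the rejected parent draw need not. Still, $\Pp(Z_p=Z_v,\text{residual})=0$, because $Z_v$ is produced by fresh proposals that are independent of $Z_p$ given the histories, and the two laws on the overlap region cannot produce exact equality unless the parent's law has atoms there. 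This atom case is the only delicate step. If it causes trouble, the fallback is to define $D_{e,t}$ as the event that the residual branch is entered, which has probability exactly $\delta$ in every case.

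Assumption~\ref{ass:access} guarantees that every operation used is available: drawing $x$, side-effect-free evaluation of $\pi_{p,t},\pi_{v,t}$, rejection without updating either policy, and rewards requested only after acceptance. Because the common factor $P_X(\dd x)$ cancels in both acceptance ratios, no density of $P_X$ is ever required. The gluing result of Appendix~\ref{app:coupling} then states that performing these edge couplings recursively by depth produces the joint law used by \TCAB.

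\textbf{Proposal count.} Conditional on entering the residual branch, proposals are i.i.d. Bernoulli trials with success probability $\delta$. So $U_{pv,t}$ is geometric on $\{1,2,\dots\}$ with mean $1/\delta$. The residual branch is entered with conditional probability $\delta$, and $U_{pv,t}=0$ otherwise. This gives $\E[U_{pv,t}\mid\Fcal_{t-1},h_p,h_v]=\delta\cdot(1/\delta)=1$ when $\delta>0$. When $\delta=0$, we have $g\le f$ $\rho$-a.e., the first-stage acceptance probability is $1$ almost surely, and $U_{pv,t}=0$.

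Taking expectations by the tower property, $\E[U_{pv,t}]=\Pp(\delta_{pv,t}>0)\le1$. This is the stated formula when read conditionally on the realized $\delta$. Summing over the $J-1$ edges of $\Tcal_t$ and the $T$ rounds shows that the total expected residual-proposal overhead is at most $(J-1)T$.
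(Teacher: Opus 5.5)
Your proof is correct and follows essentially the paper's route: the two density computations of Lemma~\ref{lem:one-sided-rejection}, the geometric count with success probability $\delta$, the tower property, and tree gluing. The only variation is that you obtain maximality from the sandwich $1-\delta\le\Pp(Z_p=Z_v)\le 1-\delta$ rather than from mutual singularity of the residual laws; that sandwich already forces $\Pp(Z_p=Z_v,\text{residual})=0$ with or without atoms (atoms are the norm here, e.g.\ finite test sets), so your ``delicate step'' and the fallback redefinition of $D_{e,t}$ are unnecessary, and the independence/atom reasoning there is beside the point anyway, because first-stage rejection has probability $(f-g)_+/f$, which vanishes on $\{g\ge f\}$, so the rejected parent lies in $\{f>g\}$ almost surely, disjoint from the child's residual support $\{g>f\}$.
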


The proposition separates two notions of cost.  Small total variation means that reward sharing is frequent, but a rare residual event can require many screened proposals conditional on occurring.  Their product is controlled: the expected number of residual context proposals per active edge-round is at most one.  Our theoretical objective $N(T)$ nevertheless counts only reward-bearing experimental interactions, because those are the resource that feedback sharing is designed to reduce.

\subsection{Reward inheritance, matched components, and parallelism}

After all context--action pairs at round $t$ are generated, retain the tree edges on which the endpoints match and let $\mathcal C_t$ be the resulting connected components.  Equality propagates along a matched path, so every policy in a component $C$ has the same pair $Z_t^C=(X_t^C,A_t^C)$.  \TCAB\ draws one reward
\[
R_t^C\sim Q_{A_t^C}(\cdot\mid X_t^C)
\]
and assigns that same physical draw to every policy in $C$.  Distinct components receive conditionally independent fresh rewards.  Equivalently, a child inherits its parent's reward on a matched edge and opens a new reward lineage on a mismatch.

The round-synchronous order is important for adaptive policies: all policies first generate their round-$t$ observations and then update to round $t+1$.  Within a round, however, every child at the same tree depth can be coupled in parallel once its parent pair is available, and the reward queries for distinct matched components can also be issued in parallel.  For nonadaptive policies, different rounds may additionally be parallelized because there is no history dependence across $t$.

\begin{algorithm}[t]
\caption{Round-synchronous Tree-Coupled A/B Testing (\TCAB)}
\label{alg:tcab}
\KwIn{Horizon $T$, policies $(\pi_j)_{j\in\Ical}$, predictable tree rule $(\Psi_t)_{t\le T}$, context sampler, reward interface}
Initialize $H_0^j\gets\varnothing$ for every $j\in\Ical$\;
\For{$t=1,\ldots,T$}{
    Select $\Tcal_t=(\Ical,E_t,r_t)\gets\Psi_t((H_{t-1}^j)_{j\in\Ical})$\;
    Draw $Z_t^{r_t}\sim\nu_{r_t,t}^{H_{t-1}^{r_t}}$\;
    \For{$d=1,\ldots,\max_v\operatorname{dep}_t(v)$}{
        In parallel over $v$ at depth $d$, maximally couple $Z_t^v$ to $Z_t^{p_t(v)}$ using \eqref{eq:edge-first-accept}--\eqref{eq:edge-residual-accept}\;
    }
    Form the matched-edge components $\mathcal C_t$\;
    In parallel over $C\in\mathcal C_t$, query one $R_t^C\sim Q_{A_t^C}(\cdot\mid X_t^C)$ and assign it to every $j\in C$\;
    Simultaneously append $(X_t^j,A_t^j,R_t^j)$ to every policy history\;
}
\KwRet{$(S_j(T))_{j\in\Ical}$ and the desired policy contrasts.}
\end{algorithm}

\subsection{Tree choices and when they help}
\label{sec:tree-choices}

A \emph{star} rooted at a designated baseline $r$ is the simplest choice.  It makes every baseline--alternative pair maximal and has depth one, so all alternatives can be processed in parallel after the baseline pair is available.  Its cost, however, charges disagreement with the baseline $J-1$ times; it is therefore most attractive when the incumbent policy is representative of the alternatives or when baseline-versus-alternative contrasts are the main inferential targets.  We denote this specialization by \TCABstar.

A general tree can connect similar policies through intermediate policies.  If all current pairwise distances $\delta_{ij,t}(H_{t-1}^i,H_{t-1}^j)$ were available, a minimum-spanning tree minimizes their sum and hence the current-round expected query cost within the edge-local tree class; Section \ref{sec:results} states this formally.  Deterministic tie-breaking makes such a rule predictable.  The guarantee is deliberately myopic: today's joint coupling can affect cross-policy dependence in future histories, so a sequence of greedy MSTs need not be globally horizon-optimal.

In practice, current conditional TV distances are rarely available exactly.  Our \TCABmst\ implementation uses pilot data or a simulator to estimate a fixed pairwise similarity score---for example, average complete-pair mismatch over pilot trajectories---constructs the corresponding MST, and freezes it before the main experiment.  For adaptive policies the pilot score can depend on the pilot history distribution and coupling, so pilot construction, random seeds, and separation from the main experiment should be reported.

These two choices illustrate where the method gains most.  A star is natural for product launches that compare many challengers to a deployed incumbent.  An MST is natural when candidates form clusters---for example, nearby checkpoints, prompt variants, exploration parameters, or recommendation rules---because an intermediate policy can relay feedback between alternatives that are not both close to the baseline.  The exact cost identity below makes this intuition quantitative: only disagreement on the selected tree edges creates additional reward queries.

\section{Exactness and reward-query efficiency}
\label{sec:results}

For $e=\{i,j\}\in E_t$, define
\[
D_{e,t}:=\ind\{Z_t^i\ne Z_t^j\},
\qquad
N_t:=\text{number of reward queries at round }t,
\qquad
N(T):=\sum_{t=1}^T N_t.
\]
The component construction gives
\begin{equation}
N_t=1+\sum_{e\in E_t}D_{e,t},
\qquad
N(T)=T+\sum_{t=1}^T\sum_{e\in E_t}D_{e,t}
\quad\text{pathwise}.
\label{eq:pathwise-tree-cost}
\end{equation}
Since every tree has $J-1$ edges,
\begin{equation}
T\le N(T)\le JT
\quad\text{almost surely}.
\label{eq:query-range}
\end{equation}

\subsection{Exact trajectory marginals and unbiased contrasts}

\begin{theorem}[Exact marginal trajectories for predictable tree sequences]
\label{thm:marginals}
Under Assumption \ref{ass:environment}, for every finite $T,J$, every predictable rooted-tree sequence $(\Tcal_t)_{t\le T}$, and every collection of measurable history-dependent policies, the \TCAB\ trajectory
\[
H_T^j=(X_t^j,A_t^j,R_t^j)_{t=1}^T
\]
has exactly the same distribution as the standalone process \eqref{eq:standalone-transition}, for every $j\in\Ical$.
\end{theorem}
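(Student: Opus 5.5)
The proof will be an induction over rounds showing that, for each fixed $j$, the one-step conditional transition of $H^j$ given the \emph{entire} joint past $\Fcal_{t-1}$ coincides with the standalone kernel \eqref{eq:standalone-transition} evaluated at $H_{t-1}^j$. Once this is established, the law of $H_T^j$ follows by chaining kernels. Concretely, we will show that for every bounded measurable $f$,
\[
\E\bigl[f(X_t^j,A_t^j,R_t^j)\mid\Fcal_{t-1}\bigr]=\int P_X(\dd x)\sum_a\pi_{j,t}(a\mid H_{t-1}^j,x)\int f(x,a,r)\,Q_a(\dd r\mid x).
\]
Since the right-hand side depends on $\Fcal_{t-1}$ only through $H_{t-1}^j$, the tower property gives
\[
\E\bigl[g(H_{t-1}^j)f(X_t^j,A_t^j,R_t^j)\bigr]=\E\bigl[g(H_{t-1}^j)\,K_{j,t}f(H_{t-1}^j)\bigr],
\]
where $K_{j,t}$ denotes the standalone kernel. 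Induction on $t$, starting from the trivial $H_0^j=\varnothing$, then identifies the joint law of $(H_{t-1}^j,Z_t^j,R_t^j)$ with the standalone one, by a monotone-class argument over product test functions.

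The per-round step splits into a context--action part and a reward part. For the context--action part, we condition on $\Fcal_{t-1}$. Because $\Tcal_t$ is $\Fcal_{t-1}$-measurable, it is fixed, and so are all the laws $\nu_{i,t}^{H_{t-1}^i}$. We then argue by induction on depth in $\Tcal_t$ that each $Z_t^v$ has conditional law $\nu_{v,t}^{H_{t-1}^v}$. The root holds by construction. For a child $v$ with parent $p$, the parent-first step and the residual step \eqref{eq:edge-first-accept}--\eqref{eq:edge-residual-accept} give the child the mixture
\[
\min\{p_p,p_v\}\rho+(p_v-p_p)_+\rho=p_v\rho,
\]
where we invoke Proposition \ref{prop:implementation} and the gluing lemma of Appendix \ref{app:coupling}. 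Crucially, the fresh randomness used at edge $(p,v)$ is independent of everything except $Z_t^p$, so the child's marginal is correct irrespective of the dependence among the other vertices.

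For the reward part, we condition additionally on $\mathcal G_t:=\Fcal_{t-1}\vee\sigma(Z_t^i:i\in\Ical)$. The components $\mathcal C_t$ are $\mathcal G_t$-measurable. Policy $j$ lies in a unique component $C$, all of whose members share the pair $Z_t^C=Z_t^j$. By Assumption \ref{ass:environment}, the reward $R_t^C$ is a fresh draw from $Q_{A_t^j}(\cdot\mid X_t^j)$, conditionally on $\mathcal G_t$. Hence $R_t^j$ has conditional law $Q_{A_t^j}(\cdot\mid X_t^j)$ given $\mathcal G_t$. Integrating out the pairs of the other policies then yields the displayed identity.

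The main obstacle is conceptual rather than computational. The histories of the other policies, and hence the tree itself, are dependent on $H_{t-1}^j$ in complicated ways, so the result is not simply ``each round has the right marginal''. The proof must verify the transition given the full joint past, not merely given $H_{t-1}^j$. This stronger statement is what makes the cross-policy dependence harmless, and it is exactly where predictability of $\Tcal_t$ enters: a tree chosen using round-$t$ information could bias which branch $j$ falls into. The care needed is in stating the edge couplings as conditional kernels given $\Fcal_{t-1}$ and the parent pair, so that the depth induction is legitimate. Measurability of the rejection sampler also requires attention; it holds because it terminates almost surely when $\delta_{pv,t}>0$, and the residual branch is never entered when $\delta_{pv,t}=0$.
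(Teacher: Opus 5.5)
Your proposal is correct and follows essentially the same route as the paper. Both condition on $\Fcal_{t-1}$ to obtain the node law $\nu_{j,t}^{H_{t-1}^j}$ from the edge couplings and tree gluing, then condition further on the round-$t$ pairs and components to obtain the reward law $Q_a(\cdot\mid x)$, and finally chain these standalone transitions by induction over $t$ (equivalently, Ionescu--Tulcea uniqueness); your explicit depth induction and tower-property/monotone-class step only spell out details the paper leaves implicit.
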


The theorem is a complete path-law statement.  Policies are dependent across $j$, but no individual policy can statistically distinguish its \TCAB\ trajectory from a standalone run.

\begin{corollary}[Unbiased values and baseline contrasts]
\label{cor:unbiased}
If rewards have finite first moments, then
\[
\E[S_j(T)]=V_j(T),
\qquad
\E[S_j(T)-S_r(T)]=V_j(T)-V_r(T)
\]
for every $j,r\in\Ical$.
\end{corollary}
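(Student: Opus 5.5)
The corollary follows from Theorem \ref{thm:marginals} by linearity of expectation, so the plan is to reduce both identities to statements about single-policy marginal laws.

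\textbf{Step 1: each value is unbiased.} For a fixed $j$, $S_j(T)=\sum_{t=1}^T R_t^j$ is a measurable function of the trajectory $H_T^j$ alone. Theorem \ref{thm:marginals} says that the law of $H_T^j$ under \TCAB\ equals its standalone law \eqref{eq:standalone-transition}. Hence the law of $S_j(T)$ also coincides with its standalone law. Under that standalone law, $\E[S_j(T)]=V_j(T)$ by the definition \eqref{eq:values}.

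To make this rigorous, I would first check integrability. Under the standalone law, each $R_t^j$ is a draw from $Q_{A_t^j}(\cdot\mid X_t^j)$. I read ``finite first moments'' as $\E|R_t^j|<\infty$ in the standalone process, which is in any case needed for $V_j(T)$ to be defined. Then $\E|S_j(T)|\le\sum_t\E|R_t^j|<\infty$. Since the laws agree, the same bound holds under \TCAB, and the expectations are equal.

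\textbf{Step 2: contrasts are unbiased.} For the contrast, use linearity: $\E[S_j(T)-S_r(T)]=\E[S_j(T)]-\E[S_r(T)]$. This is legitimate because both terms are integrable by Step 1. Substituting $V_j(T)$ and $V_r(T)$ from Step 1 gives the claim.

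The point to stress is that no joint-law information is needed. The policies are dependent under \TCAB, but expectation of a difference depends only on the two marginals. Dependence would affect variances, not means.

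\textbf{Main subtlety.} The only real issue is the integrability hypothesis, specifically which process the moment condition refers to. Because the marginals coincide, it makes no difference whether the assumption is stated under \TCAB\ or under the standalone process. I would state this explicitly. The case $r=j$ is trivial.
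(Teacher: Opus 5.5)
Your proposal is correct and matches the paper's proof: Theorem \ref{thm:marginals} gives each $S_j(T)$ its standalone law and hence expectation $V_j(T)$, and linearity of expectation then gives the contrast identity. Your explicit integrability check, and the remark that the moment condition may equivalently refer to either process, are reasonable additions that the paper leaves implicit.
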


\subsection{Exact query cost and the scope of optimality}

The lower-bound statement requires a conditional notion of exactness.  Marginal equality of a policy's complete trajectory alone does not imply that its next-step law remains correct after conditioning on other policies' histories.

\begin{definition}[Conditionally exact design]
\label{def:conditional-exact}
A multi-policy design is \emph{conditionally exact} if, for every round $t$ and policy $j$,
\[
\mathcal L(Z_t^j\mid\Fcal_{t-1})
=\nu_{j,t}^{H_{t-1}^j}
\quad\text{almost surely},
\]
and the reward assigned to $j$, conditional on $\Fcal_{t-1}$ and $Z_t^j=(x,a)$, has law $Q_a(\cdot\mid x)$.  The design is \emph{edge-local} on a rooted tree if a nonroot policy can inherit an existing reward lineage only from its parent and only when the two complete pairs coincide; otherwise the child opens a fresh reward query that may be inherited by its descendants.
\end{definition}

\begin{theorem}[Conditional cost identity and edge-local optimality]
\label{thm:tree-cost}
Let
\[
\delta_{e,t}
:=\TV\!\left(
\nu_{i,t}^{H_{t-1}^i},
\nu_{j,t}^{H_{t-1}^j}
\right),
\qquad e=\{i,j\}\in E_t.
\]
For \TCAB,
\begin{equation}
\E[N_t\mid\Fcal_{t-1}]
=1+\sum_{e\in E_t}\delta_{e,t},
\qquad
\E[N(T)]
=T+\sum_{t=1}^T\E\!\left[\sum_{e\in E_t}\delta_{e,t}\right].
\label{eq:tree-cost-identity}
\end{equation}
Conditional on the same pre-round information and selected tree, every conditionally exact edge-local design has expected round-$t$ cost at least $1+\sum_{e\in E_t}\delta_{e,t}$.  Thus \TCAB\ is conditionally optimal in that class.
\end{theorem}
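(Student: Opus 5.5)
The plan has three parts: the two identities in \eqref{eq:tree-cost-identity} follow from the pathwise identity \eqref{eq:pathwise-tree-cost}, and the lower bound comes from a per-edge argument that any edge-local design must pay for each disagreement.

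\textbf{Upper identity.} Start from the pathwise identity \eqref{eq:pathwise-tree-cost}, $N_t=1+\sum_{e\in E_t}D_{e,t}$. Since $E_t$ is $\Fcal_{t-1}$-measurable, conditioning gives $\E[N_t\mid\Fcal_{t-1}]=1+\sum_{e\in E_t}\Pp(Z_t^i\ne Z_t^j\mid\Fcal_{t-1})$. It therefore suffices to show that each edge mismatch probability equals $\delta_{e,t}$.

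For an oriented edge $(p,v)$, condition additionally on the round-$t$ coupling randomness of the ancestors of $p$. Fix $\Fcal_{t-1}$. By the inductive construction of the gluing (Appendix~\ref{app:coupling}), or equivalently by Proposition~\ref{prop:implementation} applied depth by depth, two facts hold:
\begin{itemize}
\item the conditional law of $Z_t^p$ is $\nu_{p,t}^{H_{t-1}^p}$;
\item given $Z_t^p$, the child pair $Z_t^v$ is produced by the one-sided maximal coupling \eqref{eq:edge-first-accept}--\eqref{eq:edge-residual-accept}.
\end{itemize}
Integrating the acceptance probability $q_{pv,t}$ against $\nu_{p,t}$ gives match probability $\int\min\{p_p,p_v\}\,\dd\rho=1-\delta_{e,t}$. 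The histories entering $\delta_{e,t}$ are $\Fcal_{t-1}$-measurable, so this value is deterministic given $\Fcal_{t-1}$. The tower rule and summation over $t$ then give the unconditional identity.

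\textbf{Lower bound.} Take any conditionally exact edge-local design on the same tree with the same $\Fcal_{t-1}$. Its round-$t$ reward queries are at least one for the root. Each nonroot $v$ either inherits its parent's lineage, which requires $Z_t^v=Z_t^{p(v)}$, or opens a fresh query. Hence pathwise
\[
N_t\ge 1+\sum_{(p,v)\in E_t}\ind\{Z_t^v\ne Z_t^p\}.
\]
Conditional exactness fixes both conditional marginals, $\mathcal L(Z_t^p\mid\Fcal_{t-1})=\nu_{p,t}^{H^p_{t-1}}$ and the analogous law for $v$. For any coupling of two laws, the standard coupling inequality gives $\Pp(Z^p\ne Z^v\mid\Fcal_{t-1})\ge\TV(\nu_p,\nu_v)=\delta_{e,t}$. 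Taking conditional expectations and summing over edges yields the bound $1+\sum_{e\in E_t}\delta_{e,t}$. Matching this with the upper identity gives conditional optimality.

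\textbf{Main obstacle.} The delicate step is ensuring, in the upper identity, that the parent's conditional law given $\Fcal_{t-1}$ (and given the coupling randomness upstream) is exactly its marginal $\nu_p$. This is what lets the per-edge maximal coupling achieve exactly $\delta_{e,t}$ despite the dependence introduced by gluing. I would handle it by induction on depth, using that each child is generated from its parent alone together with fresh randomness, so the Markov structure along the tree preserves every node marginal. A secondary point is measurability: the conditional TV must be evaluated at the random histories, which is fine because the kernels are jointly measurable.
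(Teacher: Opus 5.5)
Your proposal is correct and follows essentially the same route as the paper. The identity comes from the component-counting formula $N_t=1+\sum_{e\in E_t}D_{e,t}$ together with $\E[D_{e,t}\mid\Fcal_{t-1}]=\delta_{e,t}$ from edge-wise maximality, and the lower bound comes from edge-locality plus the coupling inequality applied to the conditionally fixed marginals. There is one small slip to fix: the parent's law is $\nu_{p,t}^{H_{t-1}^p}$ given $\Fcal_{t-1}$ alone, not given the upstream coupling randomness. So condition on $Z_t^p$ only to invoke the child's fresh randomness, then integrate $q_{pv,t}$ against $\nu_{p,t}^{H_{t-1}^p}$. You should also note that the residual branch never produces a match, because a rejected parent pair lies in $\{p_{p,t}^{h_p}>p_{v,t}^{h_v}\}$ while an accepted residual lies in $\{p_{v,t}^{h_v}>p_{p,t}^{h_p}\}$.
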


This theorem does not claim global instancewise optimality among arbitrary multi-marginal designs.  A non-tree design may sometimes share a reward across nonadjacent policies and beat every tree.  The tree restriction buys an explicit sampler, simultaneous edge maximality, and exact cost accounting.

\begin{corollary}[Baseline-centered star]
\label{cor:star-cost}
For the fixed star rooted at $r$,
\begin{equation}
\E[N(T)]
=T+\sum_{t=1}^T\sum_{j\ne r}
\E\!\left[
\TV\!\left(
\nu_{r,t}^{H_{t-1}^r},
\nu_{j,t}^{H_{t-1}^j}
\right)
\right].
\label{eq:star-cost}
\end{equation}
Within the class of conditionally exact baseline-local designs, \TCABstar\ minimizes each round's conditional expected cost.
\end{corollary}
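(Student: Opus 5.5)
The plan is to specialize Theorem~\ref{thm:tree-cost} to the constant tree sequence $\Tcal_t\equiv$ the star rooted at $r$, whose edge set is $E_t=\{\{r,j\}:j\ne r\}$ for every $t$. A fixed tree is trivially $\Fcal_{t-1}$-measurable, so it is an admissible predictable tree sequence, and Theorem~\ref{thm:tree-cost} applies. Substituting this edge set into \eqref{eq:tree-cost-identity} gives $\delta_{e,t}=\TV(\nu_{r,t}^{H_{t-1}^r},\nu_{j,t}^{H_{t-1}^j})$ for $e=\{r,j\}$. Since the sum over $j\ne r$ is finite, linearity of expectation moves $\E$ inside it, and \eqref{eq:star-cost} follows directly.

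For the optimality statement, I read ``baseline-local'' as edge-local (Definition~\ref{def:conditional-exact}) on the star rooted at $r$. In that case every nonroot policy's parent is $r$, and a policy $j$ may inherit a reward lineage only from $r$, and only when $Z_t^j=Z_t^r$.

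The lower bound can be given directly, as a check on how Theorem~\ref{thm:tree-cost} applies. Fix the pre-round information. The root always needs a fresh query, and each $j\ne r$ needs a fresh query whenever $Z_t^j\ne Z_t^r$. Hence
\[
N_t\ge 1+\sum_{j\ne r}\ind\{Z_t^j\ne Z_t^r\}.
\]
Conditional exactness says that, given $\Fcal_{t-1}$, the pair $(Z_t^r,Z_t^j)$ is a coupling of $\nu_{r,t}^{H_{t-1}^r}$ and $\nu_{j,t}^{H_{t-1}^j}$. The coupling inequality then gives $\Pp(Z_t^j\ne Z_t^r\mid\Fcal_{t-1})\ge\TV(\nu_{r,t}^{H_{t-1}^r},\nu_{j,t}^{H_{t-1}^j})$. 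Summing over $j\ne r$ yields the lower bound $1+\sum_{j\ne r}\delta_{\{r,j\},t}$. By \eqref{eq:tree-cost-identity}, \TCABstar\ attains exactly this value, so it minimizes each round's conditional expected cost within the class.

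The only real subtlety is making sure that ``baseline-local'' coincides with the edge-local class on the star, so that the class quantifier in Theorem~\ref{thm:tree-cost} applies verbatim. The rest is substitution.
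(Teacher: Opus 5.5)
Your proposal is correct and matches the paper's proof, which applies Theorem~\ref{thm:tree-cost} to the star's edges $\{r,j\}$, $j\ne r$, and reads the lower-bound class as designs in which each alternative can inherit only from the baseline. Your direct derivation of the lower bound via the coupling inequality repeats the corresponding step in the proof of Theorem~\ref{thm:tree-cost}, so it works as a consistency check.
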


\begin{corollary}[Myopic MST optimality within tree designs]
\label{cor:mst-tree}
Conditional on $\Fcal_{t-1}$, suppose every pairwise distance $\delta_{ij,t}$ is available.  The smallest round-$t$ conditional expected cost among conditionally exact edge-local tree designs is
\begin{equation}
1+
\min_{\Tcal\text{ spanning tree}}
\sum_{\{i,j\}\in E(\Tcal)}\delta_{ij,t}.
\label{eq:mst-cost}
\end{equation}
A predictably selected minimum-spanning tree followed by tree-maximal coupling attains this value.  The result is current-round optimality; it does not establish full-horizon optimality of the greedy MST sequence.
\end{corollary}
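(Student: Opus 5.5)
The plan is to reduce Corollary \ref{cor:mst-tree} to Theorem \ref{thm:tree-cost}, applied one spanning tree at a time, and then minimize over trees.

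\emph{Lower bound.} Fix round $t$ and condition on $\Fcal_{t-1}$. A conditionally exact edge-local tree design selects some rooted spanning tree $\Tcal$ that is $\Fcal_{t-1}$-measurable. The round-$t$ part of Theorem \ref{thm:tree-cost} holds conditionally on the pre-round information and the selected tree. It therefore gives
\[
\E[N_t\mid\Fcal_{t-1}]\ \ge\ 1+\sum_{\{i,j\}\in E(\Tcal)}\delta_{ij,t}\ \ge\ 1+\min_{\Tcal'}\sum_{\{i,j\}\in E(\Tcal')}\delta_{ij,t}.
\]
Here $\delta_{e,t}$ from Theorem \ref{thm:tree-cost} coincides with $\delta_{ij,t}(H_{t-1}^i,H_{t-1}^j)$ from \eqref{eq:pair-tv}.

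The root choice does not matter. The edge TV sums are symmetric in $i,j$, and for any root the tree-maximal coupling realizes the same edge matching probabilities. So the bound depends only on the undirected edge set.

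If a design randomizes its tree choice using extra $\Fcal_{t-1}$-measurable randomness, the same pointwise bound holds for each realized tree. Averaging then preserves it. I would state this explicitly, since it is the only place where the order of quantifiers in the "class of designs" needs care.

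\emph{Attainability.} Every $\delta_{ij,t}$ is a measurable function of the $\Fcal_{t-1}$-measurable histories. Fix a deterministic tie-breaking rule, say lexicographic Kruskal on the index pairs, together with a fixed root such as the smallest index. The resulting minimum-spanning tree is then an $\Fcal_{t-1}$-measurable function of finitely many measurable quantities, so it is predictable.

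This is the one mildly delicate point. The map from the vector of $\delta$'s to the MST is piecewise constant with measurable pieces, namely finitely many regions defined by comparisons. Hence it is Borel measurable.

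With this predictable tree, \TCAB\ is admissible. Theorem \ref{thm:marginals} and the construction guarantee that \TCAB\ is conditionally exact and edge-local. The identity \eqref{eq:tree-cost-identity} then gives
\[
\E[N_t\mid\Fcal_{t-1}]=1+\sum_{e\in E_t}\delta_{e,t},
\]
which equals \eqref{eq:mst-cost} by the MST property. This completes the argument.

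\emph{Scope.} I would close by noting that the argument is confined to a single round. Changing the round-$t$ tree alters the joint law of future histories, and hence the future $\delta$'s. So nothing follows about the full-horizon sum, and the claim is only the stated current-round optimality.
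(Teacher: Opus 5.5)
Your proposal is correct and follows the paper's own argument. You apply the conditional edge-local lower bound of Theorem~\ref{thm:tree-cost} to each spanning tree, minimize over the finitely many trees, and use a deterministic lexicographic tie-break so the minimizer is measurable in the $\Fcal_{t-1}$-measurable weights $\delta_{ij,t}$; \TCAB\ on that tree then attains the bound by \eqref{eq:tree-cost-identity}, and your extra remarks on root-independence, randomized tree choice, and measurability of Kruskal's output are valid elaborations of steps the paper compresses into one line.
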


When a fixed star uses a baseline trajectory from historical logs, the leading $T$ queries may be avoided only if the log itself has the exact standalone law required here and the policies can be replayed against the stored baseline history.  This observation does not automatically extend to a changing-root adaptive tree.

\subsection{Low regret implies near-one-trajectory cost}

Assume rewards lie in $[0,1]$ for this subsection.  Define
\[
\mu_*(x):=\max_{a\in\A}\mu_a(x),
\qquad
\Delta_a(x):=\mu_*(x)-\mu_a(x).
\]
When the maximizer is unique, let
\[
a^*(x):=\argmax_a\mu_a(x),
\qquad
\Delta(x):=\min_{a\ne a^*(x)}\Delta_a(x).
\]
Policy $j$'s pseudo-regret is
\begin{equation}
\mathcal R_j(T)
:=\E\!\left[\sum_{t=1}^T\Delta_{A_t^j}(X_t^j)\right].
\label{eq:policy-regret}
\end{equation}
Let $d_{j,t}$ be the degree of $j$ in $\Tcal_t$ and define the degree-weighted regret
\begin{equation}
\mathcal R_{\deg}(T)
:=\E\!\left[
\sum_{t=1}^T\sum_{j\in\Ical}
 d_{j,t}\Delta_{A_t^j}(X_t^j)
\right].
\label{eq:degree-regret}
\end{equation}
Since $d_{j,t}\le J-1$,
\begin{equation}
\mathcal R_{\deg}(T)
\le (J-1)\sum_{j\in\Ical}\mathcal R_j(T).
\label{eq:degree-regret-upper}
\end{equation}

\begin{theorem}[Finite-sample regret-to-cost bound]
\label{thm:regret-cost}
Suppose the oracle action is unique $P_X$-almost surely and define
\[
F_\Delta(\eps):=\Pp\{\Delta(X)\le\eps\}.
\]
For every predictable tree sequence and every $\eps>0$,
\begin{equation}
\E[N(T)]-T
\le
2(J-1)T F_\Delta(\eps)
+\frac{\mathcal R_{\deg}(T)}{\eps}.
\label{eq:finite-regret-cost}
\end{equation}
Consequently, if $J$ is fixed and $\mathcal R_j(T)=o(T)$ for every $j$, then
\begin{equation}
\E[N(T)]=T+o(T).
\label{eq:regret-near-one}
\end{equation}
\end{theorem}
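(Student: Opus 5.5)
Starting from the cost identity of Theorem~\ref{thm:tree-cost}, $\E[N(T)]-T=\sum_t\E[\sum_{e\in E_t}\delta_{e,t}]$. I will bound each edge term by the pathwise mismatch in \eqref{eq:pathwise-tree-cost}, since $\E[D_{e,t}\mid\Fcal_{t-1}]=\delta_{e,t}$. It then suffices to bound $\E[D_{e,t}]$ for each $e=\{i,j\}\in E_t$. The tree $\Tcal_t$ is $\Fcal_{t-1}$-measurable, so summing over $e\in E_t$ inside the expectation is legitimate.

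The key step is a pathwise inclusion. If both policies play the oracle action on their own contexts, the two complete pairs may still differ, because the contexts $X_t^i$ and $X_t^j$ can differ. So I will not bound $D_{e,t}$ directly by action suboptimality. Instead I will bound the TV distance through the oracle law. Let $\nu^*:=P_X(\dd x)\,\ind\{a=a^*(x)\}$, the complete-pair law of the oracle policy. By the triangle inequality, $\delta_{e,t}\le \TV(\nu_{i,t}^{H^i},\nu^*)+\TV(\nu_{j,t}^{H^j},\nu^*)$. Since the context marginals of $\nu_{i,t}^{H^i}$ and $\nu^*$ agree,
\[
\TV(\nu_{i,t}^{H^i},\nu^*)=\int \bigl(1-\pi_{i,t}(a^*(x)\mid H^i,x)\bigr)P_X(\dd x)=\Pp\bigl(A_t^i\ne a^*(X_t^i)\mid\Fcal_{t-1}\bigr).
\]
Summing over edges, each vertex $j$ appears $d_{j,t}$ times. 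Hence
\[
\sum_{e\in E_t}\delta_{e,t}\le \sum_j d_{j,t}\,\Pp\bigl(A_t^j\ne a^*(X_t^j)\mid\Fcal_{t-1}\bigr).
\]
This is why the degree-weighted regret appears in the bound.

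Next I will split the suboptimality event by the gap. Uniqueness of the oracle holds $P_X$-a.s., so
\[
\ind\{A\ne a^*(X)\}\le \ind\{\Delta(X)\le\eps\}+\Delta_A(X)/\eps,
\]
because on $\{\Delta(X)>\eps,\,A\ne a^*(X)\}$ we have $\Delta_A(X)\ge\Delta(X)>\eps$. The context $X_t^j$ is a fresh $P_X$ draw independent of $\Fcal_{t-1}$, which is the marginal part of conditional exactness from Theorem~\ref{thm:marginals}/Definition~\ref{def:conditional-exact}. So $\Pp(\Delta(X_t^j)\le\eps\mid\Fcal_{t-1})=F_\Delta(\eps)$. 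Taking expectations and using $d_{j,t}\in\Fcal_{t-1}$ gives
\[
\E\Bigl[\sum_e\delta_{e,t}\Bigr]\le F_\Delta(\eps)\sum_j d_{j,t}+\frac1\eps\,\E\Bigl[\sum_j d_{j,t}\Delta_{A_t^j}(X_t^j)\Bigr].
\]
Every tree has total degree $\sum_j d_{j,t}=2(J-1)$. Summing over $t$ then yields \eqref{eq:finite-regret-cost}.

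For the consequence, I will use \eqref{eq:degree-regret-upper} to get $\mathcal R_{\deg}(T)=o(T)$ for fixed $J$. Dividing by $T$ and taking $\limsup$ gives $\limsup_T(\E[N(T)]-T)/T\le 2(J-1)F_\Delta(\eps)$ for every $\eps>0$. Uniqueness a.s. means $\Delta(X)>0$ a.s., so $F_\Delta(\eps)\downarrow \Pp(\Delta(X)\le 0)=0$ as $\eps\downarrow0$ by continuity of measure. The main obstacle is the first step: a naive argument would compare the two policies' actions directly and overlook the context mismatch. Routing through the oracle complete-pair law $\nu^*$, together with the exact context marginal, is what resolves it. A remaining measurability detail is to check that $a^*$ can be chosen as a measurable function, using a measurable selection of the argmax over the finite set $\A$.
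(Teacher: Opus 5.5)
Your proof is correct and follows essentially the paper's argument: bound each edge distance by $q_{i,t}+q_{j,t}$, the two endpoint policies' conditional non-oracle probabilities; sum over the tree with degree weights; split $\ind\{A\ne a^*(X)\}$ at margin $\eps$; then use the exact $P_X$ context marginal and the total degree $2(J-1)$. The differences are cosmetic. The paper gets $\delta_{ij,t}\le q_{i,t}+q_{j,t}$ pointwise in $x$ from the overlap at $a^*(x)$, which is the same inequality you obtain via the triangle inequality through $\nu^*$. It also concludes with $\eps_T=\sqrt{\mathcal R_{\deg}(T)/T}$ rather than your $\limsup$-then-$\eps\downarrow0$ argument.
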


Almost-sure uniqueness is needed only to force $F_\Delta(\eps)\to0$ as $\eps\downarrow0$.  Without a common tie-breaking rule, two zero-regret policies can disagree indefinitely on a positive-probability tie set.

\begin{corollary}[Margin and uniform-gap rates]
\label{cor:margin-rate}
If $F_\Delta(\eps)\le C\eps^\beta$ for all sufficiently small $\eps$, with $C,\beta>0$, then
\begin{equation}
\E[N(T)]-T
=O\!\left(
\left((J-1)T\right)^{1/(\beta+1)}
\mathcal R_{\deg}(T)^{\beta/(\beta+1)}
\right),
\label{eq:margin-rate}
\end{equation}
where the constant depends only on $C$ and $\beta$.  If $\Delta(X)\ge\Delta_{\min}>0$ almost surely, then
\begin{equation}
\E[N(T)]-T
\le\frac{\mathcal R_{\deg}(T)}{\Delta_{\min}}.
\label{eq:uniform-gap-rate}
\end{equation}
\end{corollary}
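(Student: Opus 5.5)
The plan is to derive both parts of the corollary from the finite-sample bound \eqref{eq:finite-regret-cost} of Theorem~\ref{thm:regret-cost}, which holds for every $\eps>0$. We choose $\eps$ to balance its two terms. Write $R:=\mathcal R_{\deg}(T)$ and $M:=(J-1)T$.

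\textbf{Uniform-gap case.} If $\Delta(X)\ge\Delta_{\min}$ almost surely, then $F_\Delta(\eps)=\Pp\{\Delta(X)\le\eps\}=0$ for every $\eps<\Delta_{\min}$. Plugging such $\eps$ into \eqref{eq:finite-regret-cost} gives $\E[N(T)]-T\le R/\eps$. Letting $\eps\uparrow\Delta_{\min}$ yields \eqref{eq:uniform-gap-rate}. The only point to check is the boundary: since $F_\Delta(\Delta_{\min})$ may be positive, we use strict inequality and take a limit rather than evaluating at $\eps=\Delta_{\min}$.

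\textbf{Margin case.} Let $\eps_0>0$ be such that $F_\Delta(\eps)\le C\eps^\beta$ for all $\eps\le\eps_0$. For such $\eps$,
\[
\E[N(T)]-T\le 2CM\eps^\beta+R/\eps .
\]
The two terms balance at $\eps^\star=(R/M)^{1/(\beta+1)}$. This choice gives $M(\eps^\star)^\beta=R/\eps^\star=M^{1/(\beta+1)}R^{\beta/(\beta+1)}$, so the bound becomes $(2C+1)M^{1/(\beta+1)}R^{\beta/(\beta+1)}$, which is \eqref{eq:margin-rate}.

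\textbf{Main obstacle: admissibility of $\eps^\star$.} The margin hypothesis only holds near zero, so we need $\eps^\star\le\eps_0$. We split into two cases.
\begin{itemize}
\item If $\eps^\star\le\eps_0$, the computation above applies directly.
\item If $\eps^\star>\eps_0$, then $R>\eps_0^{\beta+1}M$, so $R$ is comparable to $M$. We then use the trivial bound from \eqref{eq:query-range}:
\[
\E[N(T)]-T\le M = M^{1/(\beta+1)}M^{\beta/(\beta+1)} < \eps_0^{-\beta}M^{1/(\beta+1)}R^{\beta/(\beta+1)}.
\]
\end{itemize}
Combining the cases gives the rate with constant $\max\{2C+1,\eps_0^{-\beta}\}$. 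This constant depends on $C$, $\beta$ and the threshold $\eps_0$ implicit in ``sufficiently small''. To match the statement's claim that the constant depends only on $C$ and $\beta$, we read the threshold as part of the margin assumption; if it holds for all $\eps>0$, only the first case arises. The degenerate case $R=0$ is handled by letting $\eps\downarrow0$, which gives $\E[N(T)]-T\le0$ because $F_\Delta(\eps)\to0$.
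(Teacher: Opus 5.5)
Your proposal is correct, and the margin part follows the paper: substitute $F_\Delta(\eps)\le C\eps^\beta$ into \eqref{eq:finite-regret-cost} and balance at $\eps\asymp(\mathcal R_{\deg}(T)/((J-1)T))^{1/(\beta+1)}$.

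The paper stops at the balancing step. Your case split, which uses $N(T)\le JT$ from \eqref{eq:query-range} when $\eps^\star$ exceeds the threshold $\eps_0$, fills in steps the paper leaves implicit, and so does your separate treatment of $\mathcal R_{\deg}(T)=0$. Your caveat about the constant is also substantive. Take $J=K=2$ and $\Delta(X)\equiv g$, with one policy always playing the oracle action and the other never playing it. Then $\E[N(T)]-T=T$ and $\mathcal R_{\deg}(T)=gT$, while the margin condition holds with any $C$ for all $\eps<g$. A valid constant must therefore be at least $g^{-\beta/(\beta+1)}$, so it genuinely depends on the threshold and not only on $C$ and $\beta$.

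For the uniform-gap bound your route differs slightly from the paper's. You use \eqref{eq:finite-regret-cost} as a black box with $\eps<\Delta_{\min}$, where $F_\Delta(\eps)=0$, and then let $\eps\uparrow\Delta_{\min}$. The paper instead returns to the intermediate bound \eqref{eq:degree-mistake-bound-proof} and applies $\ind\{A\ne a^*(X)\}\le\Delta_A(X)/\Delta_{\min}$ pointwise. Both give the same constant. The paper's route avoids the limit and yields the stronger intermediate fact that the expected degree-weighted mistake count is itself at most $\mathcal R_{\deg}(T)/\Delta_{\min}$. Yours needs nothing beyond the statement of Theorem~\ref{thm:regret-cost}.
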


\subsection{Variance reduction for policy contrasts}
\label{sec:variance}

Assume rewards lie in $[0,1]$.  Define realized regret
\[
L_j(T):=\sum_{t=1}^T\Delta_{A_t^j}(X_t^j).
\]
For policies $i,j$, let $P_{ij,t}$ denote their unique path in the round-$t$ tree $\Tcal_t$ and set
\begin{equation}
B_{ij}(T)
:=15\sum_{t=1}^T
\E\!\left[\sum_{e\in P_{ij,t}}D_{e,t}\right]
+6\Var(L_i(T))+6\Var(L_j(T)).
\label{eq:pair-variance-proxy}
\end{equation}

\begin{theorem}[Finite-sample pairwise-contrast variance]
\label{thm:pairwise-variance}
For every $i,j\in\Ical$,
\begin{equation}
\Var\!\left(S_j(T)-S_i(T)\right)
\le B_{ij}(T).
\label{eq:path-variance-bound}
\end{equation}
In particular, if the expected number of mismatches along the time-varying paths $P_{ij,t}$ is $o(T)$ and both realized-regret variances are $o(T)$, then
\[
T^{-1}\Var\!\left(S_j(T)-S_i(T)\right)\longrightarrow0.
\]
\end{theorem}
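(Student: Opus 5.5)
Write $S_j(T)-S_i(T)=\sum_t (R_t^j-R_t^i)$ and split each round's contribution into a noise part and a mean part:
\[
R_t^j-R_t^i=\bigl(\xi_t^j-\xi_t^i\bigr)+\bigl(\mu_{A_t^j}(X_t^j)-\mu_{A_t^i}(X_t^i)\bigr),
\qquad \xi_t^k:=R_t^k-\mu_{A_t^k}(X_t^k).
\]
Set $M(T):=\sum_t(\xi_t^j-\xi_t^i)$ and $G(T):=\sum_t(\mu_{A_t^j}(X_t^j)-\mu_{A_t^i}(X_t^i))$. Then $\Var(S_j-S_i)\le 2\Var(M)+2\Var(G)$, or use a weighted split $(1+c)\Var(M)+(1+1/c)\Var(G)$ to get constants consistent with $15$ and $6$. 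I will bound the two pieces separately.

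\textbf{Noise part.} Let $\Gcal_t$ be $\Fcal_{t-1}$ together with all round-$t$ pairs. Conditional on $\Gcal_t$, the rewards are drawn component by component and are conditionally independent. So $\xi_t^j-\xi_t^i$ has conditional mean zero and is $\Fcal_t$-measurable, and $M$ is a martingale with orthogonal increments. Hence $\Var(M)=\sum_t\E[(\xi_t^j-\xi_t^i)^2]$.

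If every edge on $P_{ij,t}$ is matched, then $i$ and $j$ lie in the same component and receive the identical reward, so the increment is $0$. Otherwise the increment is bounded by $|\xi|\le1$ in each coordinate, giving
\[
(\xi_t^j-\xi_t^i)^2\le 4\,\ind\{\exists e\in P_{ij,t}:D_{e,t}=1\}\le 4\sum_{e\in P_{ij,t}}D_{e,t}.
\]

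\textbf{Mean part.} Write $\mu_{A_t^k}(X_t^k)=\mu_*(X_t^k)-\Delta_{A_t^k}(X_t^k)$. This gives
\[
G=\sum_t\bigl(\mu_*(X_t^j)-\mu_*(X_t^i)\bigr)-L_j(T)+L_i(T).
\]
The first sum, call it $W$, vanishes on rounds where the contexts coincide. That happens whenever the whole path $P_{ij,t}$ is matched, since equality propagates along matched paths. Otherwise each term is bounded by $1$ in absolute value.

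Now expand $\Var(G)\le 3\Var(W)+3\Var(L_i)+3\Var(L_j)$, or with tuned weights to reach the $6$'s. For $\Var(W)$, the context $X_t^k$ is drawn from $P_X$ independently of $\Fcal_{t-1}$ for each $k$, by conditional exactness (Theorem~\ref{thm:marginals} and the construction). Hence $\E[\mu_*(X_t^j)-\mu_*(X_t^i)\mid\Fcal_{t-1}]=0$, and $W$ is also a martingale with bounded increments supported on mismatch events. This yields
\[
\Var(W)\le\sum_t\E\Bigl[\sum_{e\in P_{ij,t}}D_{e,t}\Bigr].
\]

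\textbf{Assembly and main obstacle.} Combining the pieces gives a bound of the form $c_1\sum_t\E[\sum_{e\in P_{ij,t}}D_{e,t}]+c_2(\Var L_i+\Var L_j)$. The main obstacle is choosing the splits so that the constants land at $15$ and $6$. A sensible route is
\[
\Var(A+B+C)\le 3\bigl(\Var A+\Var B+\Var C\bigr)
\]
applied to the three-way split $M+W$, $-L_j$, $L_i$. Inside $M+W$ there is still one martingale with increments bounded by $3$ in absolute value (at most $2$ from noise and $1$ from context means) on mismatch rounds. Its variance is therefore at most $5\sum\E[\cdot]$, using $(\xi$-diff$+\mu_*$-diff$)^2\le 5\cdot\ind$ via the bound of the difference of $R$'s and $\mu_a$'s more carefully. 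This yields $15$, and the factor $3$ must be doubled to $6$ by instead using $2\Var(M+W)+2\Var(L_j-L_i)$ with $\Var(L_j-L_i)\le 2\Var L_i+2\Var L_j$, giving $10\le15$ and $4\le 6$.

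The asymptotic claim is then immediate from the stated $o(T)$ hypotheses.
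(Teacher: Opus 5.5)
Your proof is correct and is essentially the paper's argument. It uses the same decomposition $R_t^j-R_t^i=(\text{reward-noise difference})+\bigl(\mu_*(X_t^j)-\mu_*(X_t^i)\bigr)-\Delta_{A_t^j}(X_t^j)+\Delta_{A_t^i}(X_t^i)$, the same martingale bounds $4\,\E\sum_t U_t$ and $\E\sum_t U_t$ (with $U_t$ the path-mismatch indicator), and elementary variance inequalities to assemble. The one difference is that you merge the noise and context-mean martingales before splitting, which gives the sharper constants $10$ and $4$ instead of $15$ and $6$.

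Two corrections to how you justify that step:
\begin{itemize}
\item The constant $5$ does not follow from an increment bound of $3$, which would give $9$. It follows from within-round orthogonality: the context-mean difference is $\Gcal_t$-measurable, while the noise difference has zero $\Gcal_t$-conditional mean, so the cross term vanishes and you get $4+1$. Alternatively, it follows directly from $\bigl|(R_t^j-R_t^i)+(\Delta_{A_t^j}(X_t^j)-\Delta_{A_t^i}(X_t^i))\bigr|\le 2$.
\item The theorem is only an upper bound, so any constants at most $15$ and $6$ suffice. Nothing needs to be ``doubled'' to reach them.
\end{itemize}
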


Theorem \ref{thm:pairwise-variance} is the main inferential guarantee: a round contributes no reward noise to the $i$--$j$ comparison whenever the entire tree path between the two policies matches.  Any fixed zero-sum linear comparison can be written as a finite linear combination of pairwise contrasts; Appendix \ref{app:zero-sum} records the resulting general extension.

\section{Numerical experiments}

This section evaluates the empirical cost--precision trade-off of \TCAB\ using the two specializations \TCABstar\ and \TCABmst\ in three settings.  For every horizon and Monte Carlo replication, we run one faithful horizon-$T$ tree experiment and record its realized reward-query cost $N(T)$; we do not average multiple tree trajectories within a reported experimental unit.  We compare against independent A/B/n testing with the full $JT$ budget (\texttt{AB.FULL}) and against a matched-budget independent design.  The latter allocates $q_T=\lfloor N(T)/J\rfloor$ observations or learning rounds to each policy.  For history-free policies, scaling the resulting per-round estimate by $T$ remains unbiased for the horizon-$T$ cumulative target; for adaptive policies, the corresponding extrapolation is generally biased and is interpreted separately below.

Across the three experiments, \TCAB\ improves the cost--precision frontier while preserving the target contrasts.  At matched query budgets, it reduces median contrast error and variance relative to independent A/B/n designs, detects smaller policy differences, and uses substantially fewer reward observations.  The results show that STAR and MST coupling can make multi-policy comparisons practical under constrained experimental budgets.

\subsection{LLM agents as nonadaptive policies}

\paragraph{RewardBench}
RewardBench is a benchmark for assessing whether language models correctly rank a human--preferred response above a rejected response \citep{lambert-etal-2025-rewardbench}. We use all 2,985 examples in its filtered evaluation split, which span chat, difficult chat, safety, mathematical reasoning, and code, and sample examples according to weights that reproduce the benchmark's official category--level aggregation. Each example constitutes a context \(X\) containing a prompt and a preferred--rejected response pair, and the action space is binary: \(A=0\) selects the preferred response and \(A=1\) selects the rejected response. The policies are $J=12$ language models with complete scores in the pinned RewardBench results repository. For policy \(j\), we convert its cached scores into the deterministic action \(A_j(X)=0\) when it assigns the higher score to the preferred response and \(A_j(X)=1\) otherwise, using a fixed tie rule. We then model a noisy preference observation by assigning reward \(R\sim\mathrm{Bernoulli}(0.9)\) when the selected response is preferred and \(R\sim\mathrm{Bernoulli}(0.1)\) otherwise.

\paragraph{MMLU-Pro}
MMLU-Pro is a challenging multiple-choice benchmark for evaluating language understanding and reasoning across 14 academic domains, with questions containing up to ten candidate answers \citep{wang2024mmlupro}. We use its pinned test split as a finite empirical context distribution: a context $X$ is a question together with its answer options, sampled uniformly with replacement, and the action space \(A=0,\dots,9\) indexes the candidate answers. We construct $J=6$ deterministic policies from three open-weight instruction models---SmolLM2-360M-Instruct, Qwen2.5-0.5B-Instruct, and TinyLlama-1.1B-Chat---each evaluated under plain and chat-formatted prompt variants. For every model--prompt combination and question, we score the answer letters and cache the highest-scoring valid option as the policy action. The reward is \(R(X,A)=\mathbf{1}\{A=Y_X\}\), where \(Y_X\) is the correct answer, so each policy's value equals its test-set accuracy.

\begin{figure*}[ht]
\centering
\begin{subfigure}[t]{0.24\textwidth}
  \centering
  \includegraphics[width=\linewidth]
    {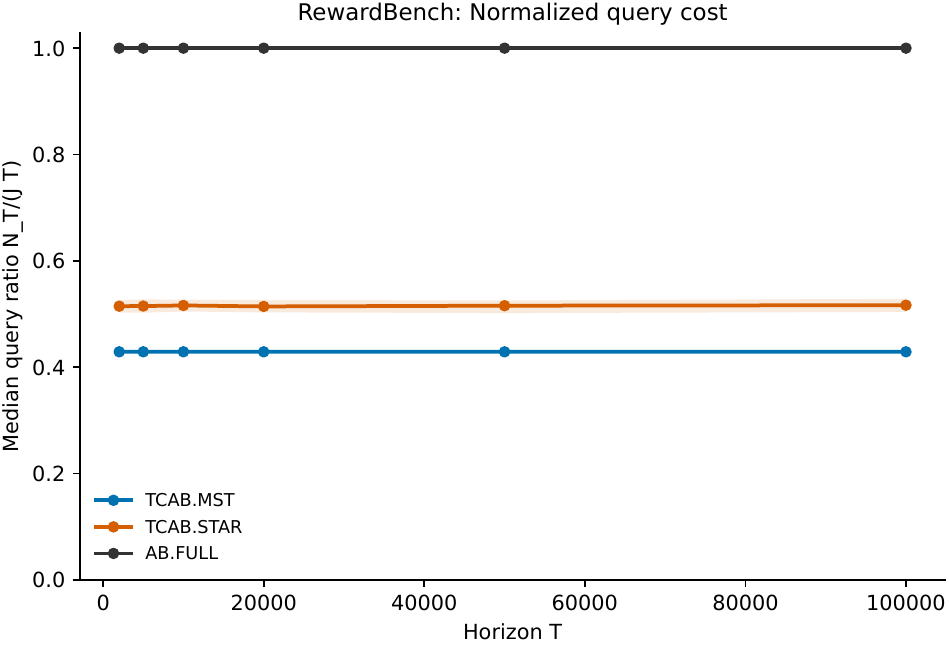}
  \caption{query ratio.}
  \label{fig:rb-query-ratio}
\end{subfigure}\hfill
\begin{subfigure}[t]{0.24\textwidth}
  \centering
  \includegraphics[width=\linewidth]
    {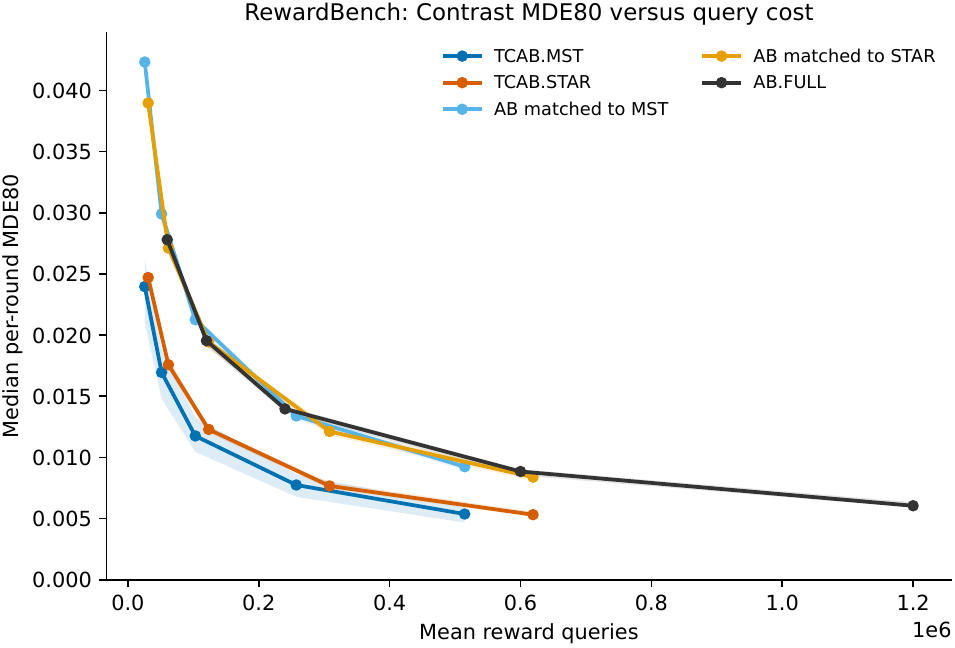}
  \caption{MDE80 vs query cost.}
  \label{fig:rb-mde}
\end{subfigure}\hfill
\begin{subfigure}[t]{0.24\textwidth}
  \centering
  \includegraphics[width=\linewidth]
    {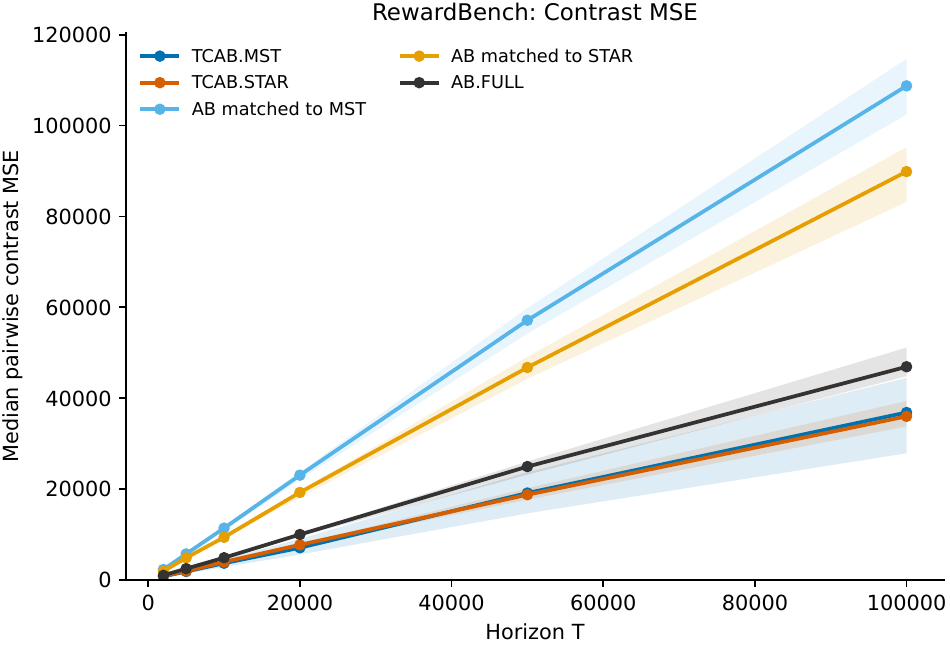}
  \caption{contrast MSE.}
  \label{fig:rb-mse}
\end{subfigure}
\hfill
\begin{subfigure}[t]{0.24\textwidth}
  \centering
  \includegraphics[width=\linewidth]
    {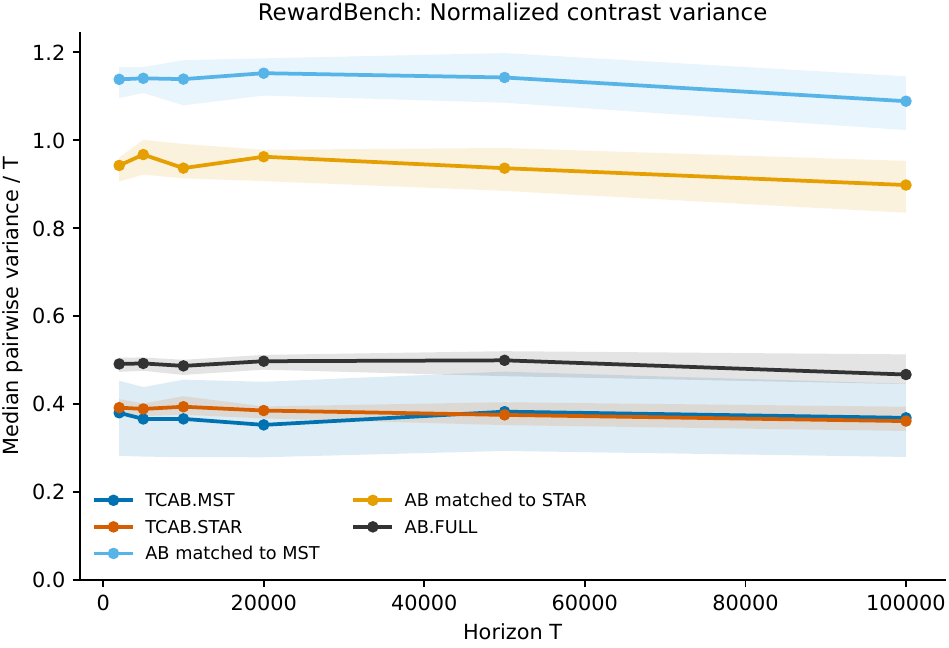}
  \caption{contrast variance.}
  \label{fig:rb-var}
\end{subfigure}

\smallskip

\begin{subfigure}[t]{0.24\textwidth}
  \centering
  \includegraphics[width=\linewidth]
    {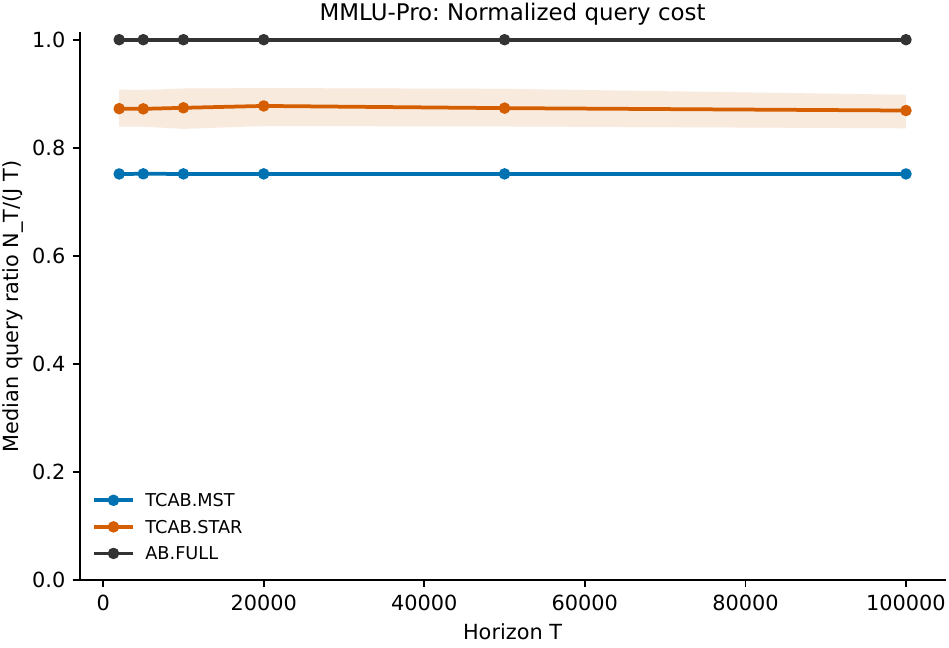}
  \caption{query ratio.}
  \label{fig:mmlu-query-ratio}
\end{subfigure}\hfill
\begin{subfigure}[t]{0.24\textwidth}
  \centering
  \includegraphics[width=\linewidth]
    {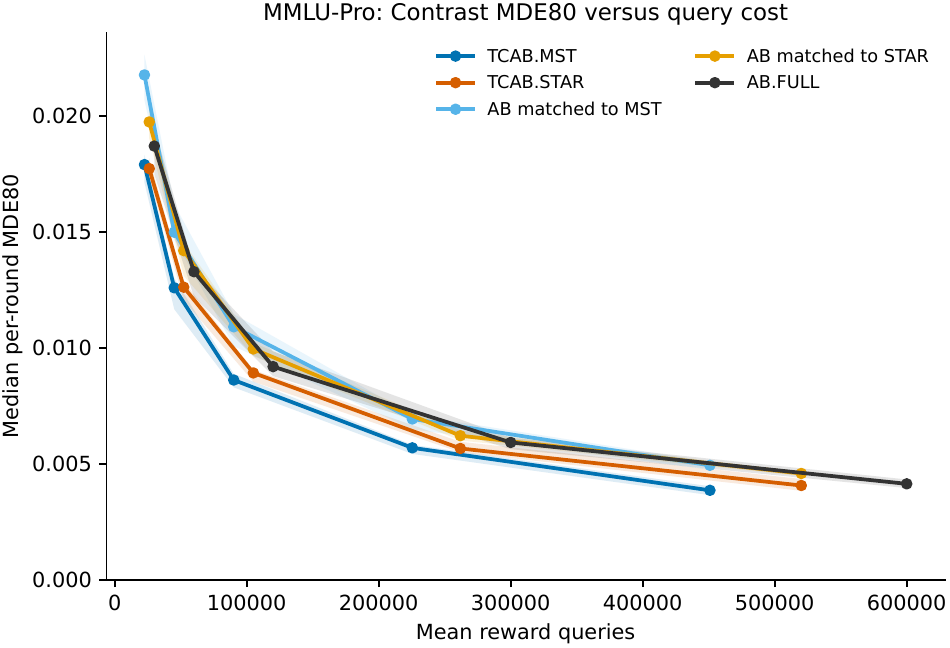}
  \caption{MDE80 vs query cost.}
  \label{fig:mmlu-mde}
\end{subfigure}\hfill
\begin{subfigure}[t]{0.24\textwidth}
  \centering
  \includegraphics[width=\linewidth]
    {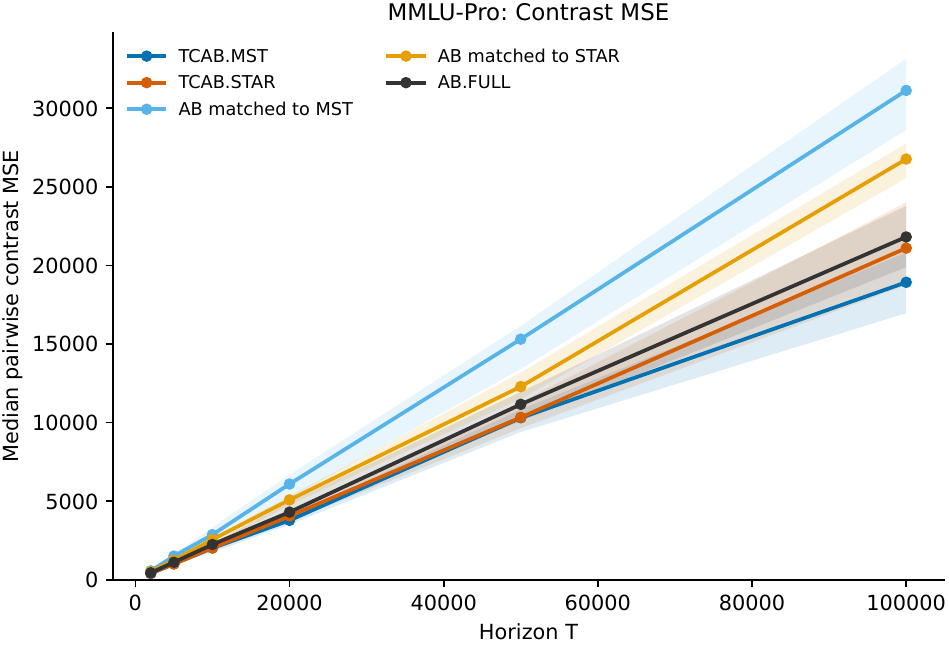}
  \caption{contrast MSE.}
  \label{fig:mmlu-mse}
\end{subfigure}\hfill
\begin{subfigure}[t]{0.24\textwidth}
  \centering
  \includegraphics[width=\linewidth]
    {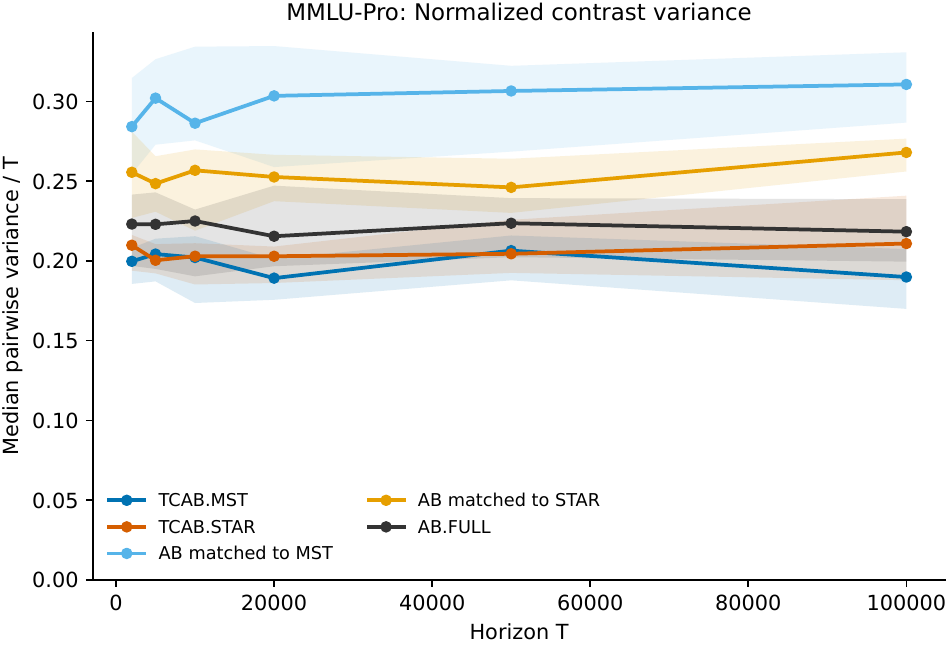}
  \caption{contrast variance.}
  \label{fig:mmlu-var}
\end{subfigure}

\caption{
  Query cost, inferential sensitivity, and estimation accuracy for RewardBench (top) and MMLU-Pro (bottom). Lines report medians and shaded regions report the interquartile range across pairwise policy contrasts over 500 independent runs.
  Lower MDE80 indicates sensitivity to smaller policy differences.
}
\label{fig:fixed-policy-cost-sensitivity}
\end{figure*}

The minimum detectable effect at 80\% power (MDE80) is the smallest per-round difference between two policies that a two-sided 5\% test detects with 80\% probability under the testing procedure used in the experiment. Lower values indicate better sensitivity. Figure \ref{fig:fixed-policy-cost-sensitivity} shows that \TCAB\ resolves smaller differences at a given reward-query budget. Both \TCABstar\ and \TCABmst\ achieve cumulative mean-squared error (MSE) and variance comparable to \texttt{AB.FULL}, and substantially smaller than independent A/B/n runs with matched budgets, while using about 80\% and 40\% of the full query budget on RewardBench and MMLU-Pro, respectively.\footnote{For history-free, time-invariant policies, every edge distance is a constant $\delta_e$, so Theorem \ref{thm:tree-cost} gives $\E[N(T)]=T\{1+\sum_{e\in E}\delta_e\}$. Hence the normalized query ratio is constant in $T$, as observed.} On RewardBench, the reported median contrast MSE and variance reductions relative to matched-budget A/B/n are about 67\% for \TCABmst\ and 59\% for \TCABstar. On MMLU-Pro, the corresponding reductions are about 31\% and 20\%.

\subsection{Adaptive online learning policies}

To further highlight the benefit of \TCAB, we consider a benchmark with adaptive policies. By Theorem \ref{thm:regret-cost}, when the policies asymptotically learn the optimal action, we are able to share most of the observations and attain $\E[N(T)]=T+o(T)$.

\paragraph{MSLR-Search}
We construct a semi-synthetic search environment from MSLR-WEB10K, a learning-to-rank dataset containing query--document relevance judgments and 136 ranking features \citep{microsoft2010mslr,QinL13}. At each round, a query is sampled uniformly and ten candidate documents are drawn from its precomputed top-20 pool; the resulting document-feature vectors form the context, and the action selects one document for the top search position. Selecting document $a$ produces a Bernoulli click whose mean is a ridge-regression relevance score, constrained to \([0.05,0.95]\). We compare six adaptive policies---LinUCB with exploration parameters \(0.5\) and \(1.0\), decaying \(\epsilon\)-greedy with constants \(0.5\) and \(2.0\), and finite-particle linear Thompson sampling with scales \(0.1\) and \(0.25\). Because these policies update from their observed clicks, this experiment evaluates \TCAB\ in a genuinely history-dependent setting; the semi-synthetic ridge-regression score permits accurate hindsight estimates of policy values and cumulative regret.

\begin{figure*}[ht]
\centering
\begin{subfigure}[t]{0.24\textwidth}
  \centering
  \includegraphics[width=\linewidth]
    {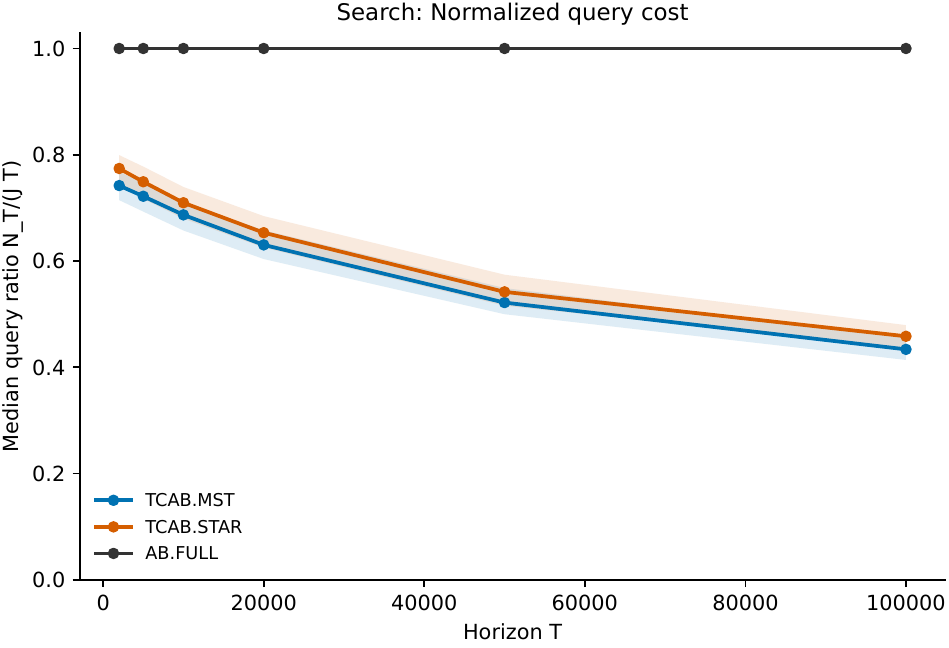}
  \caption{query ratio.}
  \label{fig:search-query-ratio}
\end{subfigure}\hfill
\begin{subfigure}[t]{0.24\textwidth}
  \centering
  \includegraphics[width=\linewidth]
    {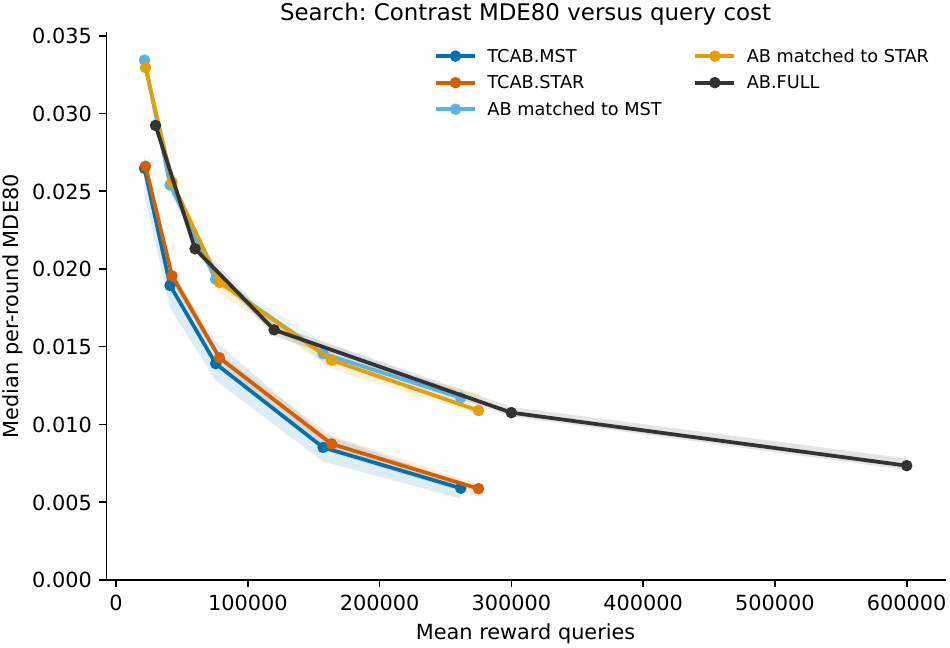}
  \caption{MDE80 vs query cost.}
  \label{fig:search-mde}
\end{subfigure}
\hfill
\begin{subfigure}[t]{0.24\textwidth}
  \centering
  \includegraphics[width=\linewidth]
    {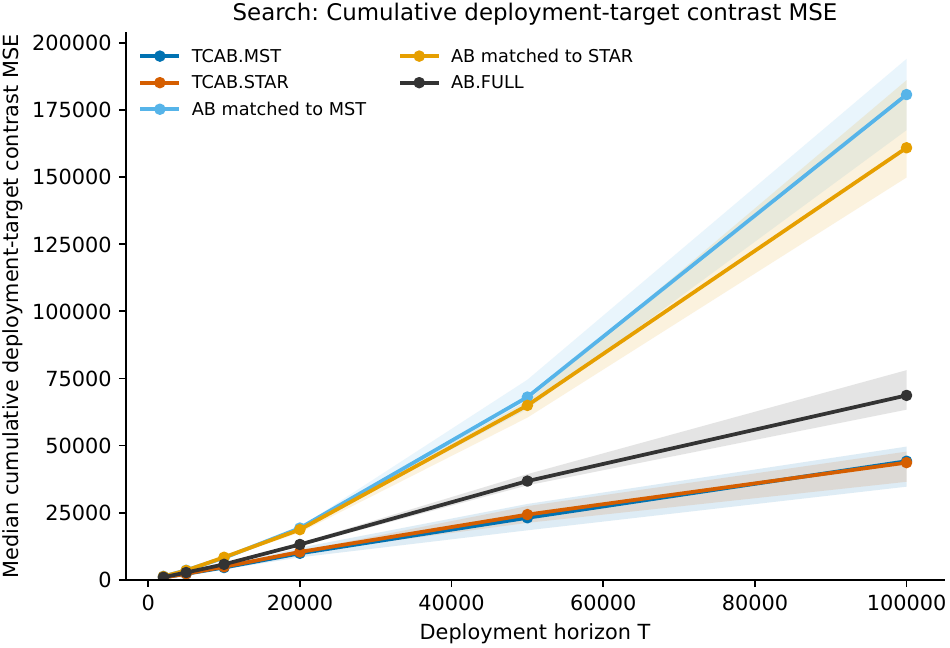}
  \caption{contrast MSE.}
  \label{fig:search-mse}
\end{subfigure}\hfill
\begin{subfigure}[t]{0.24\textwidth}
  \centering
  \includegraphics[width=\linewidth]
    {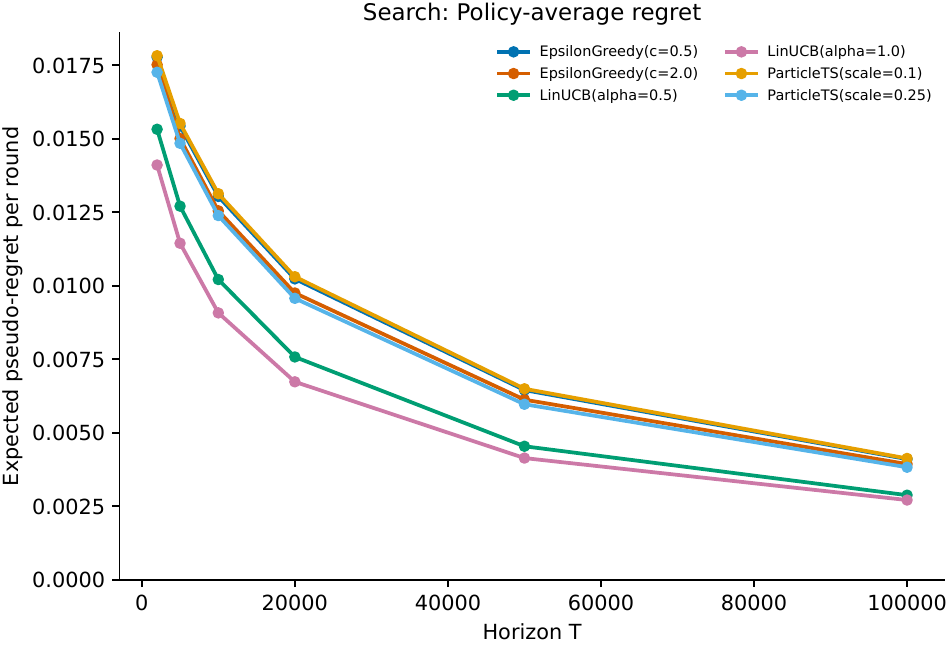}
  \caption{policy regret.}
  \label{fig:search-regret}
\end{subfigure}

\caption{
  Query cost, inferential sensitivity, and estimation accuracy for MSLR-Search. (d) plots the average pseudo-regret of each policy.
}
\label{fig:fixed-policy-cost-sensitivity-search}
\end{figure*}

Consistent with Theorem \ref{thm:regret-cost}, the normalized query cost decreases as the adaptive policies' per-round regrets fall, from roughly 80\% toward 50\% of the full $JT$ budget over the reported horizon.  Plot (b) shows a widening MDE80 advantage at matched cost.  For a realized tree cost $N(T)$, the matched-budget independent baseline can advance each of the $J$ adaptive policies for only $q_T=\lfloor N(T)/J\rfloor$ rounds.  To display an estimate against the horizon-$T$ cumulative target, the experiment rescales it as $T S_j(q_T)/q_T$.  This extrapolation is generally biased during learning because the average reward over the first $q_T$ rounds need not equal the average reward over the first $T$ rounds; the same issue does not arise for the exact horizon-$T$ \TCAB\ trajectory.

\bibliographystyle{abbrvnat}
\bibliography{Preprint}

\clearpage
\appendix

\section{Coupling preliminaries}
\label{app:coupling}

\subsection{Two-marginal maximal coupling}

Let $(\mathsf Z,\mathcal Z)$ be standard Borel and let $\nu_0,\nu_1$ be dominated by $\rho$, with densities $p_0,p_1$.  Define
\[
\lambda(\dd z):=\min\{p_0(z),p_1(z)\}\rho(\dd z),
\qquad
\alpha:=\lambda(\mathsf Z),
\qquad
\delta:=1-\alpha.
\]
When $\delta>0$, define
\[
\widetilde\nu_i(\dd z)
:=\frac{\nu_i(\dd z)-\lambda(\dd z)}{\delta},
\qquad i\in\{0,1\}.
\]

\begin{lemma}[Canonical maximal coupling]
\label{lem:canonical-maximal}
With probability $\alpha$, draw $Z$ from $\lambda/\alpha$ and set $Z_0=Z_1=Z$; this branch is absent when $\alpha=0$.  With probability $\delta$, draw $(Z_0,Z_1)$ from any coupling of $(\widetilde\nu_0,\widetilde\nu_1)$; this branch is absent when $\delta=0$.  Then $Z_i\sim\nu_i$ and
\[
\Pp(Z_0\ne Z_1)=\delta=\TV(\nu_0,\nu_1).
\]
No coupling has a smaller disagreement probability.
\end{lemma}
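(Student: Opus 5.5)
The plan has three parts: verify the marginals, compute the disagreement probability, and prove the lower bound. Throughout, I write $\mu(\dd z):=\nu_i(\dd z)-\lambda(\dd z)$, which is nonnegative because $p_i\ge\min\{p_0,p_1\}$.

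For the marginals, fix a measurable set $B$ and condition on the branch. On the matched branch, $Z_i$ has law $\lambda/\alpha$. On the residual branch, $Z_i$ has law $\widetilde\nu_i$ regardless of which coupling of the residuals is chosen. Hence
\[
\Pp(Z_i\in B)=\alpha\cdot\frac{\lambda(B)}{\alpha}+\delta\cdot\frac{\nu_i(B)-\lambda(B)}{\delta}=\nu_i(B),
\]
with the obvious modification when $\alpha=0$ or $\delta=0$ (the corresponding term is simply dropped). Before this, I will check that $\widetilde\nu_i$ is a probability measure: its mass is $(1-\alpha)/\delta=1$.

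For the disagreement probability, I use that $\widetilde\nu_0$ and $\widetilde\nu_1$ have mutually singular densities $(p_0-p_1)_+/\delta$ and $(p_1-p_0)_+/\delta$, supported on the disjoint sets $\{p_0>p_1\}$ and $\{p_1>p_0\}$. So on the residual branch $Z_0\ne Z_1$ almost surely, while on the matched branch $Z_0=Z_1$. This gives $\Pp(Z_0\ne Z_1)=\delta$. I will also record the identity
\[
\delta=1-\int\min\{p_0,p_1\}\,\dd\rho=\tfrac12\int|p_0-p_1|\,\dd\rho=\TV(\nu_0,\nu_1),
\]
which follows from $\min\{a,b\}=(a+b-|a-b|)/2$.

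For optimality, take any coupling $(Z_0,Z_1)$ and any measurable $B$. Then
\[
\nu_0(B)-\nu_1(B)=\Pp(Z_0\in B)-\Pp(Z_1\in B)\le\Pp(Z_0\in B,\,Z_1\notin B)\le\Pp(Z_0\ne Z_1).
\]
Choosing $B=\{p_0>p_1\}$ makes the left side equal to $\TV(\nu_0,\nu_1)$. One technical point needs care here: the event $\{Z_0\ne Z_1\}$ must be measurable. This holds because the diagonal is measurable in $\mathsf Z\times\mathsf Z$ for a standard Borel space. That measurability check is the only real subtlety; everything else is routine bookkeeping with the degenerate cases $\alpha\in\{0,1\}$.
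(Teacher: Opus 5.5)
Your proof is correct. The marginal computation, the disagreement probability via the mutual singularity of $\widetilde\nu_0$ and $\widetilde\nu_1$ on $\{p_0>p_1\}$ and $\{p_1>p_0\}$, and the identity $\delta=1-\int\min\{p_0,p_1\}\,\dd\rho=\TV(\nu_0,\nu_1)$ all match the paper's argument.

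The one real difference is the optimality step.
\begin{itemize}
\item The paper takes the matched submeasure $A\mapsto\Pp(Z_0=Z_1\in A)$ of an arbitrary coupling. It notes that this submeasure is dominated by both $\nu_0$ and $\nu_1$, and concludes that its mass is at most $\alpha$.
\item You instead use the test set $B=\{p_0>p_1\}$ and the chain $\nu_0(B)-\nu_1(B)\le\Pp(Z_0\in B,\,Z_1\notin B)\le\Pp(Z_0\ne Z_1)$.
\end{itemize}

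These are two standard forms of the coupling inequality. Your version is fully explicit. The paper's step from \emph{dominated by both marginals} to \emph{mass at most} $\int\min\{p_0,p_1\}\,\dd\rho$ silently needs the same split on $\{p_0>p_1\}$. The paper's route, once made precise, gives something slightly stronger. It shows that every coupling's matched part is setwise dominated by $\lambda$, so any maximal coupling must have matched part exactly $\lambda$. Your argument yields only the scalar bound.
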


\begin{proof}
The marginal decomposition $\nu_i=\lambda+\delta\widetilde\nu_i$ gives the required laws.  The residual measures are mutually singular because their densities are supported on $\{p_0>p_1\}$ and $\{p_1>p_0\}$, respectively, so the residual branch cannot agree with positive probability.  Also
\[
\TV(\nu_0,\nu_1)
=1-\int\min\{p_0,p_1\}\dd\rho
=\delta.
\]
For any coupling, the common-value submeasure is dominated by both marginals and therefore has mass at most $\int\min\{p_0,p_1\}\dd\rho=\alpha$.  Hence $\Pp(Z_0\ne Z_1)\ge\delta$.  See \citet{thorisson2000coupling} and \citet{lindvall2002lectures}.
\end{proof}

\begin{lemma}[One-sided rejection sampler]
\label{lem:one-sided-rejection}
Suppose samples from $\nu_0,\nu_1$ and the density ratios are available.  Draw $Z_0\sim\nu_0$.  Set $Z_1=Z_0$ with probability $\min\{1,p_1(Z_0)/p_0(Z_0)\}$.  Otherwise repeatedly draw $Y\sim\nu_1$ until it is accepted with probability
\[
1-\min\{1,p_0(Y)/p_1(Y)\},
\]
and set $Z_1=Y$.  This is a maximal coupling.  The first-stage match probability is $\alpha$.  If $\delta>0$, conditional on entering the residual branch, the number of $\nu_1$ proposals is geometric with success probability $\delta$ and mean $1/\delta$; when $\delta=0$, the branch is never entered.
\end{lemma}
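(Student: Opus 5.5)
The plan is to verify three things in order: that the output $Z_1$ has law $\nu_1$ (the marginal of $Z_0$ is $\nu_0$ by construction), that the disagreement probability equals $\delta$, and the stated geometric count for the residual branch. Maximality then follows from Lemma \ref{lem:canonical-maximal}, since any coupling with $\Pp(Z_0\ne Z_1)=\delta$ attains the lower bound proved there.

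\emph{First stage.} Since $Z_0\sim p_0\rho$, the event $\{Z_1=Z_0\text{ from the first stage}\}$ has sub-density
\[
p_0(z)\min\{1,p_1(z)/p_0(z)\}=\min\{p_0(z),p_1(z)\},
\]
using $p_0>0$ $\nu_0$-a.s. So this branch contributes exactly the overlap measure $\lambda$ to the law of $Z_1$. Its total mass is $\alpha$, which is the first-stage match probability.

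\emph{Residual stage.} A single proposal $Y\sim\nu_1$ is accepted with sub-density
\[
p_1(y)\bigl(1-\min\{1,p_0(y)/p_1(y)\}\bigr)=(p_1(y)-p_0(y))_+ .
\]
Its total acceptance probability is therefore $\int(p_1-p_0)_+\,\dd\rho=1-\alpha=\delta$. The proposals are i.i.d. and independent of the first stage. Hence, when $\delta>0$, the number of proposals is geometric with success probability $\delta$ and mean $1/\delta$. The accepted value has law $(p_1-p_0)_+\rho/\delta=\widetilde\nu_1$, and the loop terminates almost surely. The residual branch is entered with probability $1-\alpha=\delta$, so this branch contributes $\delta\widetilde\nu_1=\nu_1-\lambda$ to the law of $Z_1$. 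Adding the two contributions gives $Z_1\sim\nu_1$. When $\delta=0$, the first-stage acceptance has probability one and the branch is never entered.

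\emph{Disagreement.} On the residual branch, the accepted $Y$ lies in $\{p_1>p_0\}$. The value $Z_0$ that was rejected there is distributed, conditionally on rejection, with density proportional to $(p_0-p_1)_+$, so it lies in $\{p_0>p_1\}$. These two sets are disjoint, so $Z_0\ne Z_1$ almost surely on this branch. Hence $\Pp(Z_0\ne Z_1)=\delta=\TV(\nu_0,\nu_1)$, and maximality follows from Lemma \ref{lem:canonical-maximal}.

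The main obstacle is bookkeeping rather than any conceptual step. One point is handling the null set where $p_0(Z_0)=0$, which has $\nu_0$-probability zero. The other is confirming that the rejected parent and the accepted residual sample cannot coincide; this is settled by the disjointness of the two supports noted above.
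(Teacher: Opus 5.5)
Your proof is correct and follows the paper's argument. The first-stage accepted sub-density $\min\{p_0,p_1\}$ gives $\lambda$ with mass $\alpha$, and the residual acceptance sub-density $(p_1-p_0)_+$ has mass $\delta$, which yields $\widetilde\nu_1$ and the geometric count. Your disjoint-support argument for $\Pp(Z_0\ne Z_1)=\delta$, together with the appeal to the lower bound in Lemma~\ref{lem:canonical-maximal}, just makes explicit what the paper leaves implicit when it identifies the sampler with the canonical coupling, whose residual measures are mutually singular.
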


\begin{proof}
The accepted first-stage measure is
\[
p_0(z)\min\{1,p_1(z)/p_0(z)\}\rho(\dd z)
=\lambda(\dd z).
\]
A residual proposal is accepted with unnormalized density
\[
p_1(z)\left[1-\min\{1,p_0(z)/p_1(z)\}\right]
=(p_1(z)-p_0(z))_+,
\]
whose total mass is $\delta$.  Thus the accepted residual has law $\widetilde\nu_1$ and the proposal count has the stated geometric law.
\end{proof}

\subsection{Tree gluing}

\begin{lemma}[Tree gluing lemma]
\label{lem:tree-gluing}
Let $\Tcal=(\Ical,E)$ be a finite rooted tree on a standard Borel space $\mathsf Z$.  For every directed edge $e=(p(v),v)$, let $\gamma_e$ be a coupling of $\nu_{p(v)}$ and $\nu_v$.  Then there is a joint coupling of $(\nu_j)_{j\in\Ical}$ whose marginal on every edge is $\gamma_e$.  In particular, maximal couplings can be attained simultaneously on all tree edges.
\end{lemma}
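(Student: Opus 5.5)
The plan is to build the joint law by sequential conditional sampling along the tree, ordered by depth, using regular conditional probabilities (disintegrations). These exist because $\mathsf Z$ is standard Borel. Fix a root $r$ and enumerate the nonroot vertices $v_1,\dots,v_{J-1}$ so that every parent precedes its children, for instance in breadth-first order. For each directed edge $e=(p(v),v)$, disintegrate $\gamma_e$ with respect to its first coordinate:
\[
\gamma_e(\dd z_p,\dd z_v)=\nu_{p(v)}(\dd z_p)\,K_e(z_p,\dd z_v),
\]
where $K_e$ is a probability kernel, unique up to $\nu_{p(v)}$-null sets.

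The joint law on $\mathsf Z^{\Ical}$ is then defined by
\[
\Gamma(\dd z)=\nu_r(\dd z_r)\prod_{k=1}^{J-1}K_{(p(v_k),v_k)}\bigl(z_{p(v_k)},\dd z_{v_k}\bigr),
\]
which is well defined by the Ionescu--Tulcea construction. In words, $Z_r\sim\nu_r$, and each child is drawn conditionally on its parent's value only, independently of everything else given that value.

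The key step is an induction along the enumeration showing that $Z_v\sim\nu_v$ for every $v$ and $(Z_{p(v)},Z_v)\sim\gamma_e$ for every edge. The root holds by construction. For $v_k$, the parent $p(v_k)$ appears earlier in the order, so by the induction hypothesis $Z_{p(v_k)}\sim\nu_{p(v_k)}$. Given the previously sampled coordinates, $Z_{v_k}$ has law $K_e(Z_{p(v_k)},\cdot)$. Integrating out all earlier coordinates other than the parent therefore gives pair law $\nu_{p(v_k)}\otimes K_e=\gamma_e$. The second marginal of $\gamma_e$ is $\nu_{v_k}$, which completes the induction.

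The main subtlety is measurability and the a.e. uniqueness of the kernels. I would handle it by noting that the kernels only enter through integrals against $\nu_{p(v)}$, which is exactly the parent's marginal, so null-set modifications do not change $\Gamma$. The final claim, that maximal couplings can be attained simultaneously on all tree edges, follows by taking each $\gamma_e$ to be the canonical maximal coupling of Lemma~\ref{lem:canonical-maximal}. The edge disagreement probabilities then equal $\TV(\nu_{p(v)},\nu_v)$ simultaneously.

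I would also remark that in the algorithm, the one-sided rejection sampler of Lemma~\ref{lem:one-sided-rejection} realizes exactly such a kernel $K_e$. This identifies the \TCAB\ round law with $\Gamma$.
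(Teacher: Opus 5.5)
Your proposal is correct and follows essentially the same route as the paper. Both arguments disintegrate each $\gamma_e$ against the parent marginal $\nu_{p(v)}$, draw $Z_r\sim\nu_r$ and then each child from $K_e(Z_{p(v)},\cdot)$ in depth order, and check the node and edge laws by induction. Your explicit kernel-product formula, the null-set remark, and the observation that the one-sided rejection sampler realizes such a kernel $K_e$ add detail but no new idea.
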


\begin{proof}
Disintegrate
\[
\gamma_e(\dd z_p,\dd z_v)
=\nu_p(\dd z_p)K_e(z_p,\dd z_v),
\]
which is possible on standard Borel spaces.  Draw $Z_r\sim\nu_r$ and then recursively draw
\[
Z_v\sim K_{(p(v),v)}(Z_{p(v)},\cdot),
\]
conditionally independently across siblings if desired.  Induction over depth gives every node marginal $\nu_v$, and the construction gives the prescribed edge law.  Acyclicity removes compatibility constraints that can arise on graphs with cycles.
\end{proof}

\section{Proof of the implementation proposition}
\label{app:implementation-proof}

\begin{proof}[Proof of Proposition \ref{prop:implementation}]
Fix a round, an oriented edge $(p,v)$, and the two current histories.  Apply Lemma \ref{lem:one-sided-rejection} to
\[
\nu_p(\dd x,a)=P_X(\dd x)\pi_{p,t}(a\mid h_p,x),
\qquad
\nu_v(\dd x,a)=P_X(\dd x)\pi_{v,t}(a\mid h_v,x),
\]
with dominating measure $P_X\otimes\mathrm{counting}$.  The common $P_X$ factor cancels, yielding exactly \eqref{eq:edge-first-accept} and \eqref{eq:edge-residual-accept}.  Lemma \ref{lem:tree-gluing} glues these edge laws into the round-$t$ tree coupling used in Algorithm \ref{alg:tcab}.  The residual branch is entered with probability $\delta_{pv,t}$ and, when positive, its conditional proposal mean is $1/\delta_{pv,t}$; its unconditional contribution is therefore one.  Summing over at most $J-1$ edges in each of $T$ rounds proves the proposal bound.  Reward sampling is performed only after the complete-pair coupling and therefore does not enter the acceptance ratios.
\end{proof}

\section{Proofs of exactness and query cost}
\label{app:exactness}

\begin{proof}[Proof of Theorem \ref{thm:marginals}]
Fix $t$ and condition on $\Fcal_{t-1}$.  The tree $\Tcal_t$ and every policy history are then fixed.  By the canonical edge construction and Lemma \ref{lem:tree-gluing},
\begin{equation}
\mathcal L(Z_t^j\mid\Fcal_{t-1})
=\nu_{j,t}^{H_{t-1}^j}
\label{eq:conditional-node-law-proof}
\end{equation}
for every $j$.  In particular, for measurable $B\subseteq\X$ and $a\in\A$,
\[
\Pp(X_t^j\in B,A_t^j=a\mid\Fcal_{t-1})
=\int_B\pi_{j,t}(a\mid H_{t-1}^j,x)P_X(\dd x).
\]

Now condition further on all round-$t$ complete pairs and their matched components.  A component with common pair $(x,a)$ receives a fresh draw from $Q_a(\cdot\mid x)$, independent of the coupling randomness.  Thus, for every node,
\begin{equation}
\mathcal L(R_t^j\mid\Fcal_{t-1},Z_t^j=(x,a))
=Q_a(\cdot\mid x).
\label{eq:conditional-reward-law-proof}
\end{equation}
Sharing the draw across a component changes cross-policy dependence but not any node's conditional law.  Equations \eqref{eq:conditional-node-law-proof}--\eqref{eq:conditional-reward-law-proof} are exactly the standalone transition \eqref{eq:standalone-transition}.  Starting from empty histories and iterating proves equality of each complete marginal path law by induction, equivalently by uniqueness in the Ionescu--Tulcea theorem.
\end{proof}

\begin{proof}[Proof of Corollary \ref{cor:unbiased}]
Theorem \ref{thm:marginals} gives the standalone distribution, hence expectation, of every $S_j(T)$.  Linearity gives the baseline-contrast identities.
\end{proof}

\begin{proof}[Proof of Theorem \ref{thm:tree-cost}]
Removing the mismatched edges from a tree creates exactly one more component per removed edge.  Because \TCAB\ queries one reward per component, \eqref{eq:pathwise-tree-cost} holds.  Conditional on $\Fcal_{t-1}$, every selected edge is a maximal coupling, so
\[
\E[D_{e,t}\mid\Fcal_{t-1}]=\delta_{e,t}.
\]
Taking conditional expectation in the pathwise identity, then summing and taking ordinary expectations, proves \eqref{eq:tree-cost-identity}.

For the lower bound, fix $\Fcal_{t-1}$ and consider any conditionally exact edge-local design on the same selected tree.  On edge $e=\{i,j\}$, conditional exactness fixes the two node marginals as $\nu_{i,t}^{H_{t-1}^i}$ and $\nu_{j,t}^{H_{t-1}^j}$.  The coupling inequality therefore gives
\[
\Pp(Z_t^i\ne Z_t^j\mid\Fcal_{t-1})\ge\delta_{e,t}.
\]
Edge locality forces a new reward lineage whenever a child mismatches its parent, so its conditional expected cost is at least
\[
1+\sum_{e\in E_t}
\Pp(Z_t^i\ne Z_t^j\mid\Fcal_{t-1})
\ge1+\sum_{e\in E_t}\delta_{e,t}.
\]
Tree gluing shows that all edgewise lower bounds can coexist, and \TCAB\ attains them.
\end{proof}

\begin{proof}[Proof of Corollary \ref{cor:star-cost}]
Apply Theorem \ref{thm:tree-cost} to the edges $\{r,j\}$, $j\ne r$.  The lower-bound class specializes to designs in which every alternative can inherit only from the baseline.
\end{proof}

\begin{proof}[Proof of Corollary \ref{cor:mst-tree}]
For any selected tree, Theorem \ref{thm:tree-cost} gives minimal conditional tree-local cost $1+\sum_{e\in E}\delta_{e,t}$.  Minimizing this sum over spanning trees is exactly the minimum-spanning-tree problem.  There are finitely many labeled trees; a deterministic lexicographic rule among minimizers makes the choice measurable in the edge weights and hence predictable.  Applying the tree coupling preserves every node marginal.  The argument is pointwise in the current histories and therefore establishes only the stated current-round result.
\end{proof}

\section{Proofs of regret-based efficiency}
\label{app:regret}

\begin{proof}[Proof of Theorem \ref{thm:regret-cost}]
For history $h_j$, define the conditional nonoracle probability
\[
q_{j,t}(h_j)
:=\int_{\X}
\left[1-\pi_{j,t}(a^*(x)\mid h_j,x)\right]P_X(\dd x).
\]
For an edge $\{i,j\}$ and fixed context $x$, write $p_i,p_j$ for the two current action distributions.  Their overlap includes at least the smaller mass assigned to the common oracle action, so
\begin{align*}
\TV(p_i,p_j)
&=1-\sum_a\min\{p_i(a),p_j(a)\}\\
&\le1-\min\{p_i(a^*(x)),p_j(a^*(x))\}\\
&\le[1-p_i(a^*(x))]+[1-p_j(a^*(x))].
\end{align*}
Integrating over $P_X$ gives
\[
\delta_{ij,t}\le q_{i,t}(H_{t-1}^i)+q_{j,t}(H_{t-1}^j).
\]
Summing over the round-$t$ tree and then over time yields
\begin{equation}
\E[N(T)]-T
\le
\E\!\left[
\sum_{t=1}^T\sum_{j\in\Ical}
 d_{j,t}\ind\{A_t^j\ne a^*(X_t^j)\}
\right].
\label{eq:degree-mistake-bound-proof}
\end{equation}
Here we used conditional exactness of \TCAB\ to identify $q_{j,t}$ with the conditional nonoracle probability of policy $j$.

For every $\eps>0$,
\[
d_{j,t}\ind\{A_t^j\ne a^*(X_t^j)\}
\le
 d_{j,t}\ind\{\Delta(X_t^j)\le\eps\}
+\frac{d_{j,t}}{\eps}\Delta_{A_t^j}(X_t^j).
\]
Because $d_{j,t}$ is $\Fcal_{t-1}$-measurable and $X_t^j$ has conditional marginal $P_X$,
\[
\sum_{j\in\Ical}
\E\!\left[d_{j,t}\ind\{\Delta(X_t^j)\le\eps\}\right]
=F_\Delta(\eps)\E\!\left[\sum_jd_{j,t}\right]
=2(J-1)F_\Delta(\eps).
\]
Summing over $t$ and using \eqref{eq:degree-regret} in \eqref{eq:degree-mistake-bound-proof} proves \eqref{eq:finite-regret-cost}.

For fixed $J$, \eqref{eq:degree-regret-upper} and $\mathcal R_j(T)=o(T)$ imply $r_T:=\mathcal R_{\deg}(T)/T\to0$.  If $r_T>0$, choose $\eps_T=\sqrt{r_T}$; otherwise choose any deterministic $\eps_T\downarrow0$.  Almost-sure uniqueness and finite $K$ imply $\Delta(X)>0$ almost surely and hence $F_\Delta(\eps_T)\to0$.  Divide \eqref{eq:finite-regret-cost} by $T$ to obtain \eqref{eq:regret-near-one}.
\end{proof}

\begin{proof}[Proof of Corollary \ref{cor:margin-rate}]
Under the margin condition, \eqref{eq:finite-regret-cost} is at most
\[
2C(J-1)T\eps^\beta+\frac{\mathcal R_{\deg}(T)}{\eps}.
\]
Balancing the two terms gives
\[
\eps\asymp
\left(
\frac{\mathcal R_{\deg}(T)}{(J-1)T}
\right)^{1/(\beta+1)},
\]
which proves \eqref{eq:margin-rate}.  Under a uniform gap, every nonoracle action incurs loss at least $\Delta_{\min}$, so the right side of \eqref{eq:degree-mistake-bound-proof} is at most $\mathcal R_{\deg}(T)/\Delta_{\min}$.
\end{proof}

\section{Proofs of the variance results}
\label{app:variance}

\begin{proof}[Proof of Theorem \ref{thm:pairwise-variance}]
Fix $i,j$ and define
\[
U_t^{ij}:=
\ind\{\text{at least one edge in }P_{ij,t}\text{ mismatches}\}.
\]
Then
\begin{equation}
U_t^{ij}\le\sum_{e\in P_{ij,t}}D_{e,t}.
\label{eq:path-union-proof}
\end{equation}
If $U_t^{ij}=0$, equality propagates along the matched path, so $Z_t^i=Z_t^j$ and $R_t^i=R_t^j$.

Define
\[
C_t^{ij}:=\mu_*(X_t^j)-\mu_*(X_t^i).
\]
Both endpoint contexts have conditional marginal $P_X$, hence
$\E[C_t^{ij}\mid\Fcal_{t-1}]=0$.  Also $C_t^{ij}=0$ when $U_t^{ij}=0$ and $|C_t^{ij}|\le1$.  Therefore
\begin{equation}
\Var\!\left(\sum_{t=1}^TC_t^{ij}\right)
\le\E\!\left[\sum_{t=1}^TU_t^{ij}\right].
\label{eq:context-martingale-bound-proof}
\end{equation}

Let $\Gcal_t$ enlarge $\Fcal_{t-1}$ by all round-$t$ complete pairs, the tree, and the component partition, but not the reward draws.  Define
\[
E_t^{ij}
:=\{R_t^j-\mu_{A_t^j}(X_t^j)\}
 -\{R_t^i-\mu_{A_t^i}(X_t^i)\}.
\]
Conditional on $\Gcal_t$, each residual has mean zero.  Thus $\E[E_t^{ij}\mid\Gcal_t]=0$.  Moreover, $E_t^{ij}=0$ when $U_t^{ij}=0$ and $|E_t^{ij}|\le2U_t^{ij}$, giving
\begin{equation}
\Var\!\left(\sum_{t=1}^TE_t^{ij}\right)
\le4\E\!\left[\sum_{t=1}^TU_t^{ij}\right].
\label{eq:reward-martingale-bound-proof}
\end{equation}

Writing $\ell_t^j:=\Delta_{A_t^j}(X_t^j)$,
\[
R_t^j-R_t^i
=C_t^{ij}+\ell_t^i-\ell_t^j+E_t^{ij}.
\]
After summing over $t$, use
\[
\Var(U+V+W)\le3\{\Var(U)+\Var(V)+\Var(W)\},
\qquad
\Var(L_i-L_j)\le2\Var(L_i)+2\Var(L_j),
\]
together with \eqref{eq:path-union-proof}--\eqref{eq:reward-martingale-bound-proof}.  This yields
\[
\Var(S_j-S_i)
\le15\sum_{t=1}^T\E\!\left[\sum_{e\in P_{ij,t}}D_{e,t}\right]
+6\Var(L_i)+6\Var(L_j),
\]
which is \eqref{eq:path-variance-bound}.
\end{proof}

\subsection{General fixed zero-sum contrasts}
\label{app:zero-sum}

The pairwise result immediately extends to comparative linear functionals.  Let
\[
S(T):=(S_0(T),\ldots,S_{J-1}(T))^\top,
\qquad c\in\R^J,
\qquad \mathbf 1^\top c=0,
\]
and define the corresponding target by
\[
\theta_c(T):=c^\top V(T).
\]
By Theorem \ref{thm:marginals}, $c^\top S(T)$ is unbiased for $\theta_c(T)$ whenever rewards have finite first moments.

\begin{theorem}[Fixed zero-sum contrasts]
\label{thm:contrast-variance}
Fix an anchor $r\in\Ical$.  Then
\begin{equation}
\Var\!\left(c^\top S(T)\right)
\le
\left(
\sum_{j\ne r}|c_j|\sqrt{B_{jr}(T)}
\right)^2.
\label{eq:general-contrast-variance}
\end{equation}
If $J,c$ are fixed, $\E[N(T)]-T=o(T)$, and $\Var(L_j(T))=o(T)$ for every policy, then $\Var(c^\top S(T))=o(T)$.
\end{theorem}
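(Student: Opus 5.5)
The plan is to reduce $c^\top S(T)$ to a linear combination of pairwise contrasts against the anchor, and then apply Theorem \ref{thm:pairwise-variance} together with a Minkowski-type bound on standard deviations. Since $\mathbf 1^\top c=0$, we have $c_r=-\sum_{j\ne r}c_j$. Hence
\[
c^\top S(T)=\sum_{j\ne r}c_j\bigl(S_j(T)-S_r(T)\bigr).
\]
Unbiasedness follows directly from Corollary \ref{cor:unbiased}.

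For the variance bound, write $Y_j:=S_j(T)-S_r(T)$. Since $\Var(\cdot)^{1/2}$ is a seminorm on square-integrable random variables (it is the $L^2$ norm of the centered variable), the triangle inequality gives
\[
\Var\Bigl(\sum_{j\ne r}c_jY_j\Bigr)^{1/2}\le\sum_{j\ne r}|c_j|\Var(Y_j)^{1/2}.
\]
Rewards lie in $[0,1]$ under the standing assumption of Section \ref{sec:variance}, so all moments are finite. Applying Theorem \ref{thm:pairwise-variance} with the pair $(r,j)$ gives $\Var(Y_j)\le B_{rj}(T)$. Here we must check the symmetry $B_{rj}=B_{jr}$: the path $P_{ij,t}$ is unordered, and both regret-variance terms appear symmetrically. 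Squaring then yields \eqref{eq:general-contrast-variance}.

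For the asymptotic statement, with $J$ and $c$ fixed it suffices to show $B_{jr}(T)=o(T)$ for each $j\ne r$; then $\sqrt{B_{jr}}=o(\sqrt T)$ and the square of a finite sum is $o(T)$. The regret-variance terms are $o(T)$ by hypothesis. For the mismatch term, every path $P_{jr,t}$ is a subset of $E_t$, so
\[
\sum_{e\in P_{jr,t}}D_{e,t}\le\sum_{e\in E_t}D_{e,t}.
\]
By the pathwise identity \eqref{eq:pathwise-tree-cost},
\[
\sum_t\E\Bigl[\sum_{e\in P_{jr,t}}D_{e,t}\Bigr]\le\E[N(T)]-T=o(T).
\]

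The only step needing care is this last one: it requires that the pairwise mismatch sums be dominated by the total edge-mismatch count. This holds precisely because the paths are sub-edge-sets of the same round's tree and every $D_{e,t}$ is nonnegative. Otherwise the argument is routine.
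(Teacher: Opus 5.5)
Your proposal is correct and follows the paper's proof essentially step for step. Both use the decomposition $c^\top S(T)=\sum_{j\ne r}c_j\{S_j(T)-S_r(T)\}$ from $\mathbf 1^\top c=0$, apply Minkowski's inequality in $L^2$ together with Theorem \ref{thm:pairwise-variance}, and bound the path-mismatch sums by $\E[N(T)]-T$ through the pathwise cost identity. Your explicit symmetry check $B_{rj}=B_{jr}$ is a small detail the paper leaves implicit.
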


\begin{corollary}[Simple second-moment condition]
\label{cor:mistake-second-moment}
Let
\[
M_j(T):=\sum_{t=1}^T\ind\{A_t^j\ne a^*(X_t^j)\}.
\]
If $\E[N(T)]-T=o(T)$ and $\E[M_j(T)^2]=o(T)$ for every $j$, then every fixed zero-sum contrast has variance $o(T)$.
\end{corollary}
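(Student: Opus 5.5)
The plan is to reduce this to Theorem~\ref{thm:contrast-variance}. That theorem already concludes $\Var(c^\top S(T))=o(T)$ for every fixed zero-sum $c$ once $J$ and $c$ are fixed, $\E[N(T)]-T=o(T)$, and $\Var(L_j(T))=o(T)$ for every $j$. The first hypothesis is assumed here. So the only work is to show that the second-moment condition on the mistake counts $M_j(T)$ controls the realized-regret variances.

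\emph{Step 1: pointwise domination.} Throughout, rewards lie in $[0,1]$, so $\mu_a(x)\in[0,1]$ and hence $0\le\Delta_a(x)\le1$ for every $(x,a)$. Moreover $\Delta_{a^*(x)}(x)=0$, so $\Delta_{A_t^j}(X_t^j)\le\ind\{A_t^j\ne a^*(X_t^j)\}$. Summing over $t$ gives, pathwise,
\[
0\le L_j(T)\le M_j(T).
\]
I read $M_j$ as presupposing a well-defined oracle action $a^*(x)$, i.e.\ $P_X$-a.s.\ uniqueness as in Theorem~\ref{thm:regret-cost}. If ties are allowed, one only needs to fix any measurable selection of a maximizer for $a^*$, and the inequality above still holds.

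\emph{Step 2: variance bound.} Since $L_j(T)\ge0$, we have $\Var(L_j(T))\le\E[L_j(T)^2]\le\E[M_j(T)^2]=o(T)$ for every $j$.

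\emph{Step 3: conclude.} With $\Var(L_j(T))=o(T)$ for all $j$ and $\E[N(T)]-T=o(T)$, Theorem~\ref{thm:contrast-variance} applies and yields $\Var(c^\top S(T))=o(T)$ for every fixed zero-sum $c$.

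For transparency, the mechanism inside that theorem is as follows. Each path $P_{jr,t}$ is contained in $E_t$, so the pathwise identity~\eqref{eq:pathwise-tree-cost} gives
\[
\sum_t\E\Bigl[\sum_{e\in P_{jr,t}}D_{e,t}\Bigr]\le\E[N(T)]-T=o(T).
\]
Hence $B_{jr}(T)=o(T)$ in~\eqref{eq:pair-variance-proxy}, and the bound~\eqref{eq:general-contrast-variance} becomes $\bigl(\sum_{j\ne r}|c_j|\sqrt{o(T)}\bigr)^2=o(T)$ with $J$ and $c$ fixed.

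There is no real obstacle. The only point needing care is Step~1: the bound $\Delta\le1$ uses the $[0,1]$ reward assumption, and the oracle action must be well defined so that $M_j$ makes sense.
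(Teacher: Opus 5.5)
Your proof is correct and matches the paper's argument: the gaps lie in $[0,1]$ and vanish at the oracle action, so $0\le L_j(T)\le M_j(T)$ pathwise, giving $\Var(L_j(T))\le\E[L_j(T)^2]\le\E[M_j(T)^2]=o(T)$, after which Theorem~\ref{thm:contrast-variance} finishes. Your side remark that any measurable maximizer selection suffices when ties occur is a valid, harmless extension.
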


\begin{proof}[Proof of Theorem \ref{thm:contrast-variance}]
Because $\sum_jc_j=0$,
\[
c^\top S(T)=\sum_{j\ne r}c_j\{S_j(T)-S_r(T)\}.
\]
Center both sides and apply Minkowski's inequality in $L^2$:
\[
\sqrt{\Var(c^\top S(T))}
\le\sum_{j\ne r}|c_j|
\sqrt{\Var(S_j(T)-S_r(T))}.
\]
Theorem \ref{thm:pairwise-variance} gives \eqref{eq:general-contrast-variance}.  Moreover,
\[
\sum_{t=1}^T\E\!\left[\sum_{e\in P_{jr,t}}D_{e,t}\right]
\le\E[N(T)]-T.
\]
Thus every $B_{jr}(T)=o(T)$ under the stated assumptions, and a fixed finite sum of $o(\sqrt T)$ terms has square $o(T)$.
\end{proof}

\begin{proof}[Proof of Corollary \ref{cor:mistake-second-moment}]
Since rewards lie in $[0,1]$, every gap lies in $[0,1]$ and
\[
0\le L_j(T)\le M_j(T).
\]
Hence
\[
\Var(L_j(T))\le\E[L_j(T)^2]
\le\E[M_j(T)^2]=o(T).
\]
Apply Theorem \ref{thm:contrast-variance}.
\end{proof}

\end{document}

%% file: defs.tex
\usepackage{bbm}
\usepackage{graphicx}
\usepackage{amsmath,amssymb,amsthm,amsfonts}

\usepackage{paralist}
\usepackage{bm}
\usepackage{xspace}
\usepackage{url}
\usepackage{prettyref}
\usepackage{boxedminipage}
\usepackage{wrapfig}
\usepackage{ifthen}
\usepackage{color}
\usepackage{xspace}

\usepackage{amsmath,amsthm,amsfonts,amssymb}
\usepackage{mathtools}
\usepackage{graphicx}

\usepackage{nicefrac}

\newtheorem*{definition*}{Definition}

\DeclareMathOperator*{\argmax}{arg\,max}

\usepackage{subcaption}

\usepackage[utf8]{inputenc}

\usepackage{xcolor}
\definecolor{expert}{HTML}{008000}
\definecolor{error}{HTML}{f96565}

\newcommand{\norm}[1]{\left\lVert #1 \right\rVert}

\usepackage{color-edits}
\addauthor{sw}{blue}

\usepackage{thmtools}
\usepackage{thm-restate}

\usepackage{tikz}
\usetikzlibrary{arrows,calc} 
\newcommand{\tikzAngleOfLine}{\tikz@AngleOfLine}
\def\tikz@AngleOfLine(#1)(#2)#3{%
\pgfmathanglebetweenpoints{%
\pgfpointanchor{#1}{center}}{%
\pgfpointanchor{#2}{center}}
\pgfmathsetmacro{#3}{\pgfmathresult}%
}

\declaretheoremstyle[
    headfont=\normalfont\bfseries, 
    bodyfont = \normalfont\itshape]{mystyle}

\usepackage[linesnumbered,algoruled,boxed,lined,noend]{algorithm2e}

\usepackage{listings}
\usepackage{amsmath}
\usepackage{amsthm}
\usepackage{tikz}
\usepackage{caption}
\usepackage{mdwmath}
\usepackage{multirow}
\usepackage{mdwtab}
\usepackage{eqparbox}
\usepackage{multicol}
\usepackage{amsfonts}
\usepackage{tikz}
\usepackage{multirow,bigstrut,threeparttable}
\usepackage{amsthm}
\usepackage{bbm}
\usepackage{epstopdf}
\usepackage{mdwmath}
\usepackage{mdwtab}
\usepackage{eqparbox}
\usetikzlibrary{topaths,calc}
\usepackage{latexsym}
\usepackage{cite}
\usepackage{amssymb}
\usepackage{bm}
\usepackage{amssymb}
\usepackage{graphicx}
\usepackage{mathrsfs}
\usepackage{epsfig}
\usepackage{psfrag}
\usepackage{setspace}
\usepackage[
            CJKbookmarks=true,
            bookmarksnumbered=true,
            bookmarksopen=true,
            colorlinks=true,
            citecolor=red,
            linkcolor=blue,
            anchorcolor=red,
            urlcolor=blue
            ]{hyperref}
\usepackage[linesnumbered,algoruled,boxed,lined]{algorithm2e}
\usepackage{algpseudocode}
\usepackage{stfloats}
\RequirePackage[authoryear]{natbib}

\usepackage{comment}
\usepackage{mathtools}
\usepackage{blkarray}
\usepackage{multirow,bigdelim,dcolumn,booktabs}

\usepackage{xparse}
\usepackage{tikz}
\usetikzlibrary{calc}
\usetikzlibrary{decorations.pathreplacing,matrix,positioning}

\usepackage[T1]{fontenc}
\usepackage[utf8]{inputenc}
\usepackage{mathtools}
\usepackage{blkarray, bigstrut}
\usepackage{gauss}

\newcommand*{\BraceAmplitude}{0.4em}%
\newcommand*{\VerticalOffset}{0.5ex}%
\newcommand*{\HorizontalOffset}{0.0em}%
\newcommand*{\blocktextwid}{3.0cm}%
\NewDocumentCommand{\InsertLeftBrace}{%
	O{} 
	O{\HorizontalOffset,\VerticalOffset} 
	O{\blocktextwid} 
	m   
	m   
	m   
}{%
	\begin{tikzpicture}[overlay,remember picture]
	\coordinate (Brace Top)    at ($(#4.north) + (#2)$);
	\coordinate (Brace Bottom) at ($(#5.south) + (#2)$);
	\draw [decoration={brace, amplitude=\BraceAmplitude}, decorate, thick, draw=black, #1]
	(Brace Bottom) -- (Brace Top) 
	node [pos=0.5, anchor=east, align=left, text width=#3, color=black, xshift=\BraceAmplitude] {#6};
	\end{tikzpicture}%
}%
\NewDocumentCommand{\InsertRightBrace}{%
	O{} 
	O{\HorizontalOffset,\VerticalOffset} 
	O{\blocktextwid} 
	m   
	m   
	m   
}{%
	\begin{tikzpicture}[overlay,remember picture]
	\coordinate (Brace Top)    at ($(#4.north) + (#2)$);
	\coordinate (Brace Bottom) at ($(#5.south) + (#2)$);
	\draw [decoration={brace, amplitude=\BraceAmplitude}, decorate, thick, draw=black, #1]
	(Brace Top) -- (Brace Bottom) 
	node [pos=0.5, anchor=west, align=left, text width=#3, color=black, xshift=\BraceAmplitude] {#6};
	\end{tikzpicture}%
}%
\NewDocumentCommand{\InsertTopBrace}{%
	O{} 
	O{\HorizontalOffset,\VerticalOffset} 
	O{\blocktextwid} 
	m   
	m   
	m   
}{%
	\begin{tikzpicture}[overlay,remember picture]
	\coordinate (Brace Top)    at ($(#4.west) + (#2)$);
	\coordinate (Brace Bottom) at ($(#5.east) + (#2)$);
	\draw [decoration={brace, amplitude=\BraceAmplitude}, decorate, thick, draw=black, #1]
	(Brace Top) -- (Brace Bottom) 
	node [pos=0.5, anchor=south, align=left, text width=#3, color=black, xshift=\BraceAmplitude] {#6};
	\end{tikzpicture}%
}%

\usetikzlibrary{patterns}

\definecolor{cof}{RGB}{219,144,71}
\definecolor{pur}{RGB}{186,146,162}
\definecolor{greeo}{RGB}{91,173,69}
\definecolor{greet}{RGB}{52,111,72}

\theoremstyle{plain}
\newtheorem{theorem}{Theorem}

\newtheorem{lemma}{Lemma}

\newtheorem{corollary}{Corollary}
\newtheorem{definition}{Definition}

\newtheorem{assumption}{Assumption}

\def\1{\mathbbm{1}}

\newenvironment{keywords}
{\bgroup\leftskip 20pt\rightskip 20pt \small\noindent{\bfseries
Keywords:} \ignorespaces}%
{\par\egroup\vskip 0.25ex}
\newlength\aftertitskip     \newlength\beforetitskip
\newlength\interauthorskip  \newlength\aftermaketitskip

\usepackage{xspace}

\newcommand{\TV}{{\sf TV}}

\definecolor{myblue}{rgb}{.8, .8, 1}
\definecolor{mathblue}{rgb}{0.2472, 0.24, 0.6} 
\definecolor{mathred}{rgb}{0.6, 0.24, 0.442893}
\definecolor{mathyellow}{rgb}{0.6, 0.547014, 0.24}

\usepackage{cleveref}
\crefname{lemma}{Lemma}{Lemmas}
\Crefname{lemma}{Lemma}{Lemmas}
\crefname{thm}{Theorem}{Theorems}
\Crefname{thm}{Theorem}{Theorems}
\Crefname{assumption}{Assumption}{Assumptions}
\Crefname{inftheorem}{Informal Theorem}{Informal Theorems}
\crefformat{equation}{(#2#1#3)}